\def\eqref#1{equation~\ref{#1}}
\def\1{\bm{1}}
\DeclareMathAlphabet{\mathsfit}{\encodingdefault}{\sfdefault}{m}{sl}
\SetMathAlphabet{\mathsfit}{bold}{\encodingdefault}{\sfdefault}{bx}{n}
\definecolor{lightpurple}{HTML}{C8A2C8}
\title{Cross-Tokenizer Likelihood Scoring \\Algorithms for Language Model Distillation}
\author{%
  Buu Phan\textsuperscript{1} \quad Ashish Khisti\textsuperscript{1} \quad Karen Ullrich\textsuperscript{2} \\
  \textsuperscript{1}University of Toronto  \quad \textsuperscript{2}Meta AI \\
  \texttt{truong.phan@mail.utoronto.ca} \\  \texttt{akhisti@ece.utoronto.ca} \quad
  \texttt{karenu@meta.com}
}
\theoremstyle{plain}
\theoremstyle{definition}
\newtheorem{lemma}{Lemma}
\theoremstyle{definition}
\newtheorem{definition}{Definition}
\newtheorem{remark}{Remark}
\newtheorem{example}{Example}
\begin{document}

\maketitle

\begin{abstract}


Computing next-token likelihood ratios between two language models (LMs) is a standard task in training paradigms such as knowledge distillation. Since this requires both models to share the same probability space, it becomes challenging when the teacher and student LMs use different tokenizers, for instance, when edge-device deployment necessitates a smaller vocabulary size to lower memory overhead. This work addresses this vocabulary misalignment problem by uncovering an implicit recursive structure in the commonly deployed Byte-Pair Encoding (BPE) algorithm and utilizing it to create a probabilistic framework for \textit{cross-tokenizer likelihood scoring}. Our method enables sequence likelihood evaluation for vocabularies different from the teacher model native tokenizer, addressing two specific scenarios: when the student vocabulary is a subset of the teacher vocabulary, and the general case where it is arbitrary. In the subset regime, our framework computes exact likelihoods and provides next-token probabilities for sequential sampling with only $\mathcal{O}(1)$ model evaluations per token. When used for distillation, this yields up to a $12\%$ reduction in memory footprint for the Qwen2.5-1.5B model while also improving baseline performance up to $4\%$ on the evaluated tasks. For the general case, we introduce a rigorous lossless procedure that leverages BPE recursive structure, complemented by a fast approximation that keeps large-vocabulary settings practical. Applied to GSM8K mathematical reasoning distillation, our method improves accuracy by over 2\%  the current state of the art. Code: ~\href{https://github.com/truongbuu/cross-tokenizer-scoring}{github.com/truongbuu/cross-tokenizer-scoring}

\end{abstract}

\section{Introduction}


Many advanced training techniques for language models (LMs) rely on computing next-token likelihood ratios between two models. For example, reinforcement learning \citep{shao2024deepseekmath} and preference optimization \citep{zeng2024token} use such ratios as bias-adjustment terms, ensuring that the learning signal received by the policy network is weighted properly. Similarly, in distillation \citep{hinton2015distilling}, the student model is trained to align with the teacher's next-token distribution to obtain richer learning signals and improved sample efficiency compared to supervised fine-tuning with human-annotated labels. However, computing these ratios is not always possible because it requires both LMs to share the same probability space, which can fail when the LMs employ different vocabulary—a  misalignment issue that has recently drawn attention in LM distillation \citep{shin2025overcoming,minixhofer2024zero,zhang2024dual}. 

 

Consider the problem of LM distillation, due to the difference in output spaces, it becomes challenging to define what alternative objective could replace the divergence minimization objective between their next-token distributions. Importantly, we note that this misalignment also arises in the input space due to tokenization bias ~\citep{phan2025exact, minixhofer2025universal}, a phenomenon that determines which token cannot appear as a next-token. For example, suppose we use Llama3 as the teacher model, then given the input prompt \(\texttt{111+11=12}\), tokenized as \([\texttt{111}, +, \texttt{11}, \texttt{=}, \texttt{12}]\) (comma-separated), the teacher model will never output \(\texttt{2}\) as the next token. This is because \(\texttt{122}\) is a single token in the Llama3 tokenizer, making the sequence \([\texttt{12}, \texttt{2}]\) impossible in the tokenized text. Consequently, if the student model employs a byte-level tokenizer, this discrepancy creates a misleading training signal, degrading performance.


Current approaches to address this issue often introduce additional auxiliary loss functions, such as the Wasserstein distance in the logit space \citep{boizardtowards}, or incorporate extra components like sequence alignment \citep{minixhofer2025universal}. Compared to scenarios where both models use the same tokenizer, these components introduce additional hyperparameters into the pipeline, deviating from the simplicity of Kullback–Leibler (KL) divergence minimization. In this work, we investigate the vocabulary misalignment problem through the lens of stochastic analysis and demonstrate that it is possible to re-align the teacher scoring model with the student vocabulary, see Figure \ref{fig:tokenizer_difference}, recovering the classic next-token divergence minimization/ likelihood ratio problem without introducing additional components. We refer to this process as \emph{``cross-tokenizer scoring''} or \emph{``conversion''} and provide a formal formulation in Section \ref{sec:algorithms}. Our analysis centers on the widely used Byte Pair Encoding (BPE) tokenization algorithm and demonstrates that if the student vocabulary is a subset of the one employed by the teacher, there is an efficient algorithm to convert the teacher model to score on the token sequences from the student vocabulary. This relates to the vocabulary trimming problem for small LMs \citep{cognetta2024analysis,ushio2023efficient}, which requires reducing the vocabulary size to satisfy resource constraints, i.e. minimizing the language modeling head memory footprint\footnote{Recent work by \citet{tao2024scaling} on scaling laws for vocabulary size demonstrates a log-log relationship, suggesting that smaller language models benefit from smaller vocabularies.}. We then extend this result to the general cross-tokenizer scenario where the teacher and student vocabularies differ \footnote{Both vocabularies are derived from the same base alphabet, i.e., UTF-8 bytes.}. Our analysis yields a lossless but computationally intensive recursive algorithm to compute exact likelihoods in this setting, complemented by a more efficient beam search-based scheme that accurately approximates the ground-truth likelihood scores.

Overall, our contributions are as follows:  
\begin{enumerate}
   \item We develop a technical insight into the structure of the BPE algorithm and introduce a novel concept of \emph{relative alphabets}. Leveraging this concept, we propose an $\mathcal{O}(1)$ next-token scoring algorithm for the case where the student vocabulary is a subset of the teacher's.
    
    \item We provide an algorithm for conversion in the general case where teacher and student vocabularies differ, deriving a lossless recursive algorithm and a faster beam-search-based approximation.
    
    \item We conduct extensive experiments to demonstrate the effectiveness of our methods in the cross-tokenizer distillation and vocabulary trimming task.
\end{enumerate}

\begin{figure}[t]
\centering
\footnotesize
\begin{tikzpicture}[
  >=Latex,
  llama/.style={draw=blue!55!black, fill=blue!12},
  qwen/.style ={draw=green!50!black, fill=green!12},
  arrowc/.style={->, thick, draw=teal!70!black},
  probc/.style ={font=\scriptsize, anchor=north, text=black!65},
  labelc/.style={font=\bfseries, anchor=south, text=black!80},
  modelname/.style = {font=\bfseries, anchor=east, text=black!85},
  tokA/.style = {llama, rounded corners=2pt, inner sep=3pt, minimum height=7mm},
  tokB/.style = {qwen,  rounded corners=2pt, inner sep=3pt, minimum height=7mm},
  rowlabel/.style = {labelc},
  level distance=12mm
]

\node[rowlabel] (labA) {Native Tokenizer: Qwen2.5};
\node[rowlabel, right=25mm of labA] (labB) {Target Tokenizer: Gemma2};

\matrix (A) [row sep=0mm, column sep=2mm, below=1mm of labA]
{
  \node[tokA] (A1) {cross}; &
  \node[tokA] (A2) {-v}; &
  \node[tokA] (A3) {ocab}; &
  \node[tokA] (A4) { dist}; &
  \node[tokA] (A5) {illation}; \\
};
\node[probc] at (A1.south) {0.21};
\node[probc] at (A2.south) {0.55};
\node[probc] at (A3.south) {0.73};
\node[probc] at (A4.south) {0.96};
\node[probc] at (A5.south) {0.82};

\matrix (B) [row sep=0mm, column sep=2mm, below=1mm of labB]
{
  \node[tokB] (B1) {cross}; &
  \node[tokB] (B2) {-}; &
  \node[tokB] (B3) {vocab}; &
  \node[tokB] (B4) {distillation};\\
};
\node[probc] at (B1.south) {0.18};
\node[probc] at (B2.south) {0.47};
\node[probc] at (B3.south) {0.62};
\node[probc] at (B4.south) {0.71};

\node[modelname, left=1mm of A1.west] (modelname) {Qwen2.5-1.5B};
\node[probc, below=1mm of modelname, text=black!70] {Next-Token Probability};

\draw[arrowc]
  ([yshift=-4mm]A5.north east) -- ([yshift=-4mm]B1.north west)
  node[midway, above, font=\footnotesize, yshift=1mm, align=center, text=teal!60!black]
  {Conversion\\Algorithm};

\end{tikzpicture}
\caption{Cross-Tokenizer Scoring: given a language model trained on a fixed tokenizer, for example Qwen2.5, we aim to compute the probability that this model assigns to sequences encoded with a different tokenizer, e.g. Gemma2.}
\vspace{-10pt}

\label{fig:tokenizer_difference}
\end{figure}

\section{Background} \label{background}
\textbf{Notations. } Let $\mathcal{A} = \{a_0, a_1, \dots, a_{|\mathcal{A}|}\}$ denote the initial alphabet, where $a_0 = \texttt{EOS}$ is the end-of-string symbol unless stated otherwise. Following previous works \citep{phan2025exact, vieiralanguage}, the \texttt{EOS} symbol marks the termination of a character sequence and is treated as a standalone token that does not appear as part of any other token. We distinguish between an infinite character sequence and a finite string $s$, which has length $|s|$. If a character sequence has $\texttt{EOS}$ at index $x_i = \texttt{EOS}$, then all subsequent entries must also be $\texttt{EOS}$, i.e., $x_j = \texttt{EOS}$ for all $j > i$. Thus, effectively, the set of all sequences is countably infinite. In contrast, a finite string $s$ does not necessarily end with $\texttt{EOS}$. For a character sequence $\vec{\mathbf{x}}$, we define its length $|\vec{\mathbf{x}}|$ as the number of characters up to and including the first occurrence of $\texttt{EOS}$.  
 
 A vocabulary $\mathcal{V}$ is a list of tokens such that $\mathcal{A} \subseteq \mathcal{V}$, and we denote individual tokens as $t \in \mathcal{V}$. An encoding of a string $s$ is a sequence of tokens $\vec{\mathbf{e}} = \mathrm{encode}(s)$ of length $|\vec{\mathbf{e}}|$, and the corresponding decoding is the string $s = \mathrm{decode}(\vec{\mathbf{e}})$. Further details on how these mappings are defined are in Section \ref{bpe_algo}. We use $\vec{\mathbf{e}}[i\!:\!j]$ to denote the subsequence of tokens from index $i$ to $j$, inclusive. Also, $\vec{\mathbf{e}} \cdot t$ denotes the encoding obtained by appending the token $t$ to the sequence $\vec{\mathbf{e}}$. For an encoding \(\vec{\mathbf{e}}\), we use 1-based indexing. The slice \(\vec{\mathbf{e}}[i:j]\) includes elements from the \(i\)-th to the \(j\)-th position, totaling \(j-i+1\) elements. The notations \(\vec{\mathbf{e}}[:i]\) and \(\vec{\mathbf{e}}[i:]\) represent the subarrays from the start to the \(i\)-th element and from the \(i\)-th element to the end, respectively, both including \(\vec{\mathbf{e}}[i]\). The term \(\vec{\mathbf{e}}[-1], \vec{\mathbf{e}}[-2],...\) refers to the last, second last, etc., token in the encoding.

\textbf{Language Models. } We view a LM $P_{\mathcal{V}}$ over a vocabulary $\mathcal{V}$ as a stochastic process that generates tokens from $\mathcal{V}$ in an autoregressive manner. For an encoding $\vec{\mathbf{e}}_{\mathcal{V}} \in \mathcal{V}^*$,  $P_{\mathcal{V}}(\vec{\mathbf{e}}_{\mathcal{V}})$ denotes the (joint) probability that this stochastic process generates a sequence  with a prefix $\vec{\mathbf{e}}$. Following  previous works \citep{phan2025exact,vieiralanguage2}, a BPE encoding is  valid (canonical) if and only if: \begin{equation}
    \vec{\mathbf{e}} = \mathrm{encode}(\mathrm{decode}(\vec{\mathbf{e}})), \label{valid_cond}
\end{equation}
otherwise it is invalid (non-canonical). Note that if $\vec{\mathbf{e}}$ is invalid, then we have $P_{\mathcal{V}}(\vec{\mathbf{e}}) = 0.0$. For the rest of this work, we assume that the given \emph{LMs will always output valid encodings}, which is compatible with the model's behavior in practice \citep{vieiralanguage2}.

\section{Sequential Structure in Byte-Pair Encoding}\label{bpe_algo}

\begin{figure}[t]
\centering
\hspace{-10pt}
\begin{tikzpicture}[scale=0.8, every node/.style={scale=0.8, minimum width=1.2cm, minimum height=0.8cm, draw, anchor=center}]

\matrix (m) [matrix of nodes, nodes in empty cells, draw=none, 
             column sep=-\pgflinewidth, row sep=-\pgflinewidth]
{
  |[fill=gray!20]| $\mathcal{V}_0$ & |[fill=gray!20]| $\mathcal{V}_1$ & |[fill=gray!20]| $\mathcal{V}_2$ & |[fill=gray!20]| $\cdots$ & |[fill=gray!20]| $\mathcal{V}_{M-1}$ & |[fill=gray!20]| $\mathcal{V}_M$ \\
  |[fill=blue!20]| $a_0$           & |[fill=blue!20]| $a_0$           & |[fill=blue!20]| $a_0$           & |[fill=blue!20]| $\cdots$ & |[fill=blue!20]| $a_0$               & |[fill=blue!20]| $a_0$ \\
  |[fill=blue!20]| $a_1$           & |[fill=blue!20]| $a_1$           & |[fill=blue!20]| $a_1$           & |[fill=blue!20]| $\cdots$ & |[fill=blue!20]| $a_1$               & |[fill=blue!20]| $a_1$ \\
  |[fill=blue!20]| $\vdots$        & |[fill=blue!20]| $\vdots$        & |[fill=blue!20]| $\vdots$        & |[fill=blue!20]|          & |[fill=blue!20]| $\vdots$            & |[fill=blue!20]| $\vdots$ \\
  |[fill=blue!20]| $a_{|\mathcal{A}|}$ & |[fill=blue!20]| $a_{|\mathcal{A}|}$ & |[fill=blue!20]| $a_{|\mathcal{A}|}$ & |[fill=blue!20]| $\cdots$ & |[fill=blue!20]| $a_{|\mathcal{A}|}$ & |[fill=blue!20]| $a_{|\mathcal{A}|}$ \\
                  & |[fill=green!20]| $t_1$           & |[fill=green!20]| $t_1$           & |[fill=green!20]| $\cdots$ & |[fill=green!20]| $t_1$               & |[fill=green!20]| $t_1$ \\
                  &                 & |[fill=green!20]| $t_2$           & |[fill=green!20]| $\cdots$ & |[fill=green!20]| $t_2$               & |[fill=green!20]| $t_2$ \\
                  &                 &                 & |[fill=green!20]| $\ddots$ & |[fill=green!20]| $\vdots$            & |[fill=green!20]| $\vdots$ \\
                  &                 &                 &          & |[fill=green!20]| $t_{M-1}$           & |[fill=green!20]| $t_{M-1}$ \\
                  &                 &                 &          &                    & |[fill=green!20]| $t_M$ \\
};

\draw[decorate,decoration={brace,mirror,amplitude=5pt},thick]
  ([xshift=-4pt]m-2-1.west) -- ([xshift=-4pt]m-5-1.west)
  node[midway,xshift=-8pt,left,draw=none]{\shortstack{\footnotesize Alphabet \\ \footnotesize $\mathcal{A}$ }};

\draw[decorate,decoration={brace,mirror,amplitude=5pt},thick]
  ([xshift=-4pt]m-6-1.west) -- ([xshift=-4pt]m-10-1.west)
  node[midway,xshift=-8pt,left,draw=none]{\shortstack{\footnotesize Added  Tokens\\ \footnotesize Following (\ref{token_rule})}};

\draw[->, thick]
  ([yshift=0.1cm]m-1-1.north) -- ([yshift=0.1cm]m-1-6.north)
  node[midway, above, draw=none] {$\mathcal{V}_{i}$ is a \emph{relative alphabet} of $\mathcal{V}_{j}$ for $ {i} \leq {j}$};




\node[anchor=north west, align=left, text width=6cm, font=\ttfamily\footnotesize, draw, rounded corners, fill=white] at ([xshift=1cm]m-1-6.north east) {
\textbf{Merge Operator}\\[0.5ex]
Input: $\vec{\mathbf{e}}_{\mathcal{V}_M}$ over $\mathcal{V}_M$\\
Output: $\vec{\mathbf{e}}_{\mathcal{V}_{M+1}}$ over $\mathcal{V}_{M+1}$\\[0.5ex]
$\vec{\mathbf{e}}_{\mathcal{V}_{M+1}} \gets [\,],\ i \gets 1$\\
\textbf{while} $i < |\vec{\mathbf{e}}_{\mathcal{V}_M}|$:\\
\quad \textbf{if} $\vec{\mathbf{e}}_{\mathcal{V}_M}[i] \cdot \vec{\mathbf{e}}_{\mathcal{V}_M}[i+1] = t_{M+1}$:\\
\quad\quad  $\vec{\mathbf{e}}_{\mathcal{V}_{M+1}}.\mathrm{append}(t_{M+1})$; $i \gets i + 2$\\
\quad \textbf{else}:\\
\quad\quad  $\vec{\mathbf{e}}_{\mathcal{V}_{M+1}}.\mathrm{append}(\vec{\mathbf{e}}_{\mathcal{V}_M}[i])$; $i \gets i + 1$\\
\textbf{return} $\vec{\mathbf{e}}_{\mathcal{V}_{M+1}}$
};

\node[anchor=north west, align=left, text width=6cm, font=\ttfamily\footnotesize, draw, rounded corners, fill=white] at ([yshift=-3.9cm, xshift=1cm]m-1-6.north east) {
\textbf{Demerge Operator}\\[0.5ex]
Input: $\vec{\mathbf{e}}_{\mathcal{V}_{M+1}}$ over $\mathcal{V}_{M+1}$\\
Output: $\vec{\mathbf{e}}_{\mathcal{V}_M}$ over $\mathcal{V}_M$\\[0.5ex]
$\vec{\mathbf{e}}_{\mathcal{V}_M} \gets [\,],\ i \gets 1$\\
\textbf{while} $i \le |\vec{\mathbf{e}}_{\mathcal{V}_{M+1}}|$:\\
\quad \textbf{if} $\vec{\mathbf{e}}_{\mathcal{V}_{M+1}}[i] = t_{M+1}$:\\
\quad\quad $\vec{\mathbf{e}}_{\mathcal{V}_{M}}.\mathrm{append}(t^{\text{left}}_{M+1}, t^{\text{right}}_{M+1})$\\
\quad \textbf{else}:\\
\quad\quad $\vec{\mathbf{e}}_{\mathcal{V}_{M}}.\mathrm{append}(\vec{\mathbf{e}}_{\mathcal{V}_{M+1}}[i])$\\
\quad $i \gets i + 1$\\
\textbf{return} $\vec{\mathbf{e}}_{\mathcal{V}_M}$
};

\end{tikzpicture}
\caption{Progressive construction of BPE vocabulary and pseudocode for the $\mathrm{merge}$ and $\mathrm{demerge}$ operations. Newly added token at each step $j$ is constructed from two tokens within $\mathcal{V}_{j-1}$ and thus any prior vocabulary $\mathcal{V}_i$ for $i < j$ can be considered as  \emph{an alphabet} of $\mathcal{V}_j$.   }\label{bpe_overview}
\vspace{-10pt}
\end{figure}

In this section, we build the foundational tools required for deriving the conversion algorithm. We begin by formalizing the structure of BPE encoding and decoding through the lens of subset vocabularies. We then introduce the key notion of the \emph{relative alphabet}, which plays a central role in the conversion algorithms developed in Section~\ref{sec:algorithms}.

\subsection{Subset Vocabularies}
We first review the structure of BPE vocabulary \citep{sennrich2015neural,gage1994new} and introduce the new concept of vocabulary subsets and sequential decoding through demerging.
\begin{definition}\label{def:bpe_vocab} 
Let $\mathcal{A}$ be the alphabet defined in Section~\ref{background}.  
A \emph{BPE vocabulary} $\mathcal{V}_M$ is an ordered list of $(|\mathcal{A}| + M)$ tokens consisting of the alphabet $\mathcal{A}$ and $M$ additional tokens constructed sequentially:
\[
\mathcal{V}_M = \{a_0, a_1, \dots, a_{|\mathcal{A}|},\; t_1, t_2, \dots, t_M\},
\]
where each merged token $t_i$ is defined recursively as
\begin{equation}
    t_i = t^{\mathrm{left}}_i \cdot t^{\mathrm{right}}_i, 
\quad \text{with} \quad 
t^{\mathrm{left}}_i, t^{\mathrm{right}}_i \in \{ a_1, \dots, a_{|\mathcal{A}|}, t_1, \dots, t_{i-1} \}. \label{token_rule}
\end{equation}

We highlight that $t^{\mathrm{left}}_i, t^{\mathrm{right}}_i \ne \verb!EOS!$ (i.e., $a_0$) for all $i$ and they must strictly follow the ordering requirement in (\ref{token_rule}). Also,  Definition \ref{def:bpe_vocab} does not concern the method used to obtain the vocabulary, i.e., which datasets are used to train the tokenizer. 
\end{definition}

\begin{definition}[BPE Vocabulary Subset]\label{def:bpe_subset}
For any $M' \leq M$, the \emph{truncated vocabulary} (or \emph{BPE vocabulary subset}) is
\[
\mathcal{V}_{M'} = \{a_0, a_1, \dots, a_{|\mathcal{A}|},\; t_1, t_2, \dots, t_{M'}\},
\]
which contains the first $M'$ merged tokens of $\mathcal{V}_M$.  
We write $\mathcal{V}_{M'} \preceq \mathcal{V}_M$ to indicate that $\mathcal{V}_{M'}$ is a subset of $\mathcal{V}_M$ in the merge-order sense, and note that $\mathcal{V}_0 {=} \mathcal{A}$. Figure \ref{bpe_overview} (left) visualizes this definition.
\end{definition}

    \textbf{Notation Simplification. }We briefly note that the notations  $\mathcal{V}_0, \mathcal{V}_1, ...\mathcal{V}_{M'},...,\mathcal{V}_{M}$ are reserved for vocabularies defined with construction in Definition \ref{def:bpe_vocab} and \ref{def:bpe_subset} to avoid notation overwhelming. We also denote $P_0, P_1,..P_M$ for the LM associated with these vocabularies. Otherwise, for any two vocabularies without this subset relation, we will use $\mathcal{V}_{\alpha}$ and $\mathcal{V}_{\beta}$.

\textbf{BPE Encoding and Decoding.}
Let $\mathcal{V}_M$ be a BPE vocabulary constructed from an initial alphabet $\mathcal{A}$, with intermediate vocabularies $\mathcal{V}_i$ defined for $0 \leq i \leq M$. Given an input string $s$, the \emph{encoding} of $s$ with respect to $\mathcal{V}_M$ is defined as:
\[
\vec{\mathbf{e}}_{\mathcal{V}_M} = \mathrm{encode}(s; \mathcal{V}_M), \quad \text{or simply} \quad \vec{\mathbf{e}}_{\mathcal{V}_M} = \mathrm{encode}(s; M).
\]
The encoding process begins from the character-level sequence $\vec{\mathbf{e}}_0 \in \mathcal{A}^*$ and applies a sequence of merge operations, shown in Figure \ref{bpe_overview} (right):
\begin{equation}
    \vec{\mathbf{e}}_{\mathcal{V}_i} = \mathrm{merge}(\vec{\mathbf{e}}_{\mathcal{V}_{i-1}}; \mathcal{V}_{i}) \triangleq \mathrm{merge}_i(\vec{\mathbf{e}}_{\mathcal{V}_{i-1}})\quad \text{for } i = 1, 2, \dots, M, \label{merges_tiny}
\end{equation}

where each $\mathrm{merge}$ applies the $i$-th merge rule in the construction of $\mathcal{V}_M$. 

\textit{Sequential Decoding.} The default decoding strategy simply maps each token to its surface alphabet form and concatenates the results. Instead, we reinterpret this process as a sequence of $\mathrm{demerge}$  operations. Specifically, we define:
\[
s = \mathrm{decode}(\vec{\mathbf{e}}_{\mathcal{V}_M}; \mathcal{V}_M) \quad \text{or simply} \quad s = \mathrm{decode}(\vec{\mathbf{e}}_{\mathcal{V}_M}; M).
\]
Starting from $\vec{\mathbf{e}}_{\mathcal{V}_M}$, we apply the inverse of the $i$-th merge rule using a $\mathrm{demerge}$ function associated with vocabulary $\mathcal{V}_i$, see Figure \ref{bpe_overview} (right):
\begin{equation}
    \vec{\mathbf{e}}_{\mathcal{V}_{i-1}} = \mathrm{demerge}(\vec{\mathbf{e}}_{\mathcal{V}_i}; \mathcal{V}_i) \triangleq \mathrm{demerge}_i(\vec{\mathbf{e}}_{\mathcal{V}_i}) \quad \text{for } i = M, M{-}1, \dots, 1. \label{demerges_tiny}
\end{equation}

The process terminates at the character-level string $\vec{\mathbf{e}}_{\mathcal{V}_0}$. 

\subsection{Relative Alphabet}\label{sec:rel_alphabet}
Let $\mathcal{V}_{M'} \preceq \mathcal{V}_M$, and we are given an encoding $\vec{\mathbf{e}}_{\mathcal{V}_{M'}} = \mathrm{encode}(s; M')$ of a string $s$, tokenized with  $\mathcal{V}_{M'}$. We begin with the following question: how can we obtain the corresponding encoding of $s$ under the full vocabulary $\mathcal{V}_M$? A direct approach is to decode $\vec{\mathbf{e}}_{\mathcal{V}_{M'}}$ to recover the original string $s$, and then re-encode it using $\mathcal{V}_M$. This two-step procedure can be expressed as:
\begin{equation}
\text{(1) Decode: } s = \mathrm{decode}(\vec{\mathbf{e}}_{\mathcal{V}_{M'}}; M') 
\quad 
\text{(2) Encode: } \vec{\mathbf{e}}_{\mathcal{V}_M} = \mathrm{encode}(s; M). \label{bridge_enc}
\end{equation}
This requires going through the intermediate representation, i.e. the string $s$, which is redundant. From Equation~(\ref{merges_tiny}), we can compute directly $\vec{\mathbf{e}}_{\mathcal{V}_M}$ by continuing the merge operations from $i = M'+1$ to $M$. Conversely, to convert an encoding $\vec{\mathbf{e}}_{\mathcal{V}_M}$ under $\mathcal{V}_M$ to one under $\mathcal{V}_{M'}$, we can directly apply the demerge operations for $i = M, M{-}1, \dots, M'+1$. Recall that the function $\mathrm{encode}(.;M)$ and $\mathrm{decode}(.;M)$ are also defined as a series of $\mathrm{merge}$ and $\mathrm{demerge}$ steps, we thus can view the subset vocabulary $\mathcal{V}_{M'}$ as an intermediate (or relative) alphabet of $\mathcal{V}_M$.
\vspace{5pt}

\begin{definition}[Relative Alphabet]\label{def:relative_alphabet}
If $\mathcal{V}_{M'} \preceq \mathcal{V}_M$, then $\mathcal{V}_{M'}$ is a \emph{relative alphabet} with respect to $\mathcal{V}_M$. The \emph{relative encoding} and \emph{relative decoding} function that maps an encoding $\vec{\mathbf{e}}_{\mathcal{V}_{M'}}$ from $\mathcal{V}_{M'}$ to $\vec{\mathbf{e}}_{\mathcal{V}_M}$ from $\mathcal{V}_{M}$ and back are defined as:
\begin{equation}
   \vec{\mathbf{e}}_{\mathcal{V}_M}= \mathrm{encode}_{M' \rightarrow M}(\vec{\mathbf{e}}_{\mathcal{V}_{M'}}), 
    \quad  
    \vec{\mathbf{e}}_{\mathcal{V}_{M'}} = \mathrm{decode}_{M \xrightarrow[]{} M'}(\vec{\mathbf{e}}_{\mathcal{V}_M}),
\end{equation}
where:
\begin{align}
    \mathrm{encode}_{M' \rightarrow M}(\vec{\mathbf{e}}_{\mathcal{V}_{M'}}) 
    &= \mathrm{merge}_{M} \circ \cdots \circ \mathrm{merge}_{M'+1}(\vec{\mathbf{e}}_{\mathcal{V}_{M'}}), \\
    \mathrm{decode}_{M \xrightarrow[]{} M'}(\vec{\mathbf{e}}_{\mathcal{V}_M}) 
    &= \mathrm{demerge}_{M'+1} \circ \cdots \circ \mathrm{demerge}_{M}(\vec{\mathbf{e}}_{\mathcal{V}_M}).
\end{align}
\end{definition}
Following this definition, we now introduce the concept of \emph{relative cover encodings}, which generalizes the notion of cover encodings from \citep{phan2025exact} to the case of relative alphabet.
\vspace{5pt}
\begin{definition}[Relative Cover Encodings]\label{def:relative_cover_encs}
Let $\mathcal{V}_{M'} \preceq \mathcal{V}_M$, and let $\vec{\mathbf{e}}_{M'}$ be a valid encoding over the vocabulary $\mathcal{V}_{M'}$. We say that an encoding $\vec{\mathbf{e}}_{\mathcal{V}_M}$ over $\mathcal{V}_M$ is a \emph{relative cover encoding} of $\vec{\mathbf{e}}_{\mathcal{V}_{M'}}$ if the following conditions hold:
\begin{enumerate}
    \item There exists a sequence $\vec{\mathbf{x}}$ such that $\vec{\mathbf{e}}_{\mathcal{V}_M} = \mathrm{encode}(\vec{\mathbf{x}}; M)[:|\vec{\mathbf{e}}_{\mathcal{V}_M}|]$.
    \item There exists an index $i \leq |\vec{\mathbf{e}}_{\mathcal{V}_{M'}}|$ such that:
    \begin{align*}
        &\mathrm{decode}_{ M \to M'}(\vec{\mathbf{e}}_{\mathcal{V}_{M}}[1{:}|\vec{\mathbf{e}}_{\mathcal{V}_{M}}|-1]) = \vec{\mathbf{e}}_{\mathcal{V}_{M'}}[1{:}i-1], \\
        &\vec{\mathbf{e}}_{\mathcal{V}_{M'}}[i{:}] \in \mathrm{prefix}\left(\mathrm{decode}_{ M \to M'}(\vec{\mathbf{e}}_{\mathcal{V}_{M}}[-1])\right),
    \end{align*}
    i.e. the last token of $\vec{\mathbf{e}}_{\mathcal{V}_M}$ starts from some location within $\vec{\mathbf{e}}_{\mathcal{V}_{M'}}$.
\end{enumerate}

We denote by $\mathrm{cover}_{M' \to M}(\vec{\mathbf{e}}_{\mathcal{V}_{M'}})$ the set of all such relative cover encodings of $\vec{\mathbf{e}}_{\mathcal{V}_{M'}}$. 
\end{definition}
\begin{example} \label{example2}
    Let $\mathcal{A}=\mathcal{V}_0=\{\texttt{a},\texttt{b}\}$, $\mathcal{V}_1=\{\texttt{a}, \texttt{b}, \texttt{a}\cdot \texttt{b}\}$ and $\mathcal{V}_2=\{\texttt{a}, \texttt{b}, \texttt{a}\cdot \texttt{b}, \texttt{ab}\cdot \texttt{a}\}$. Then we have $\mathrm{cover}_{1 \to 2}([\texttt{a}, \texttt{ab}]) = \{[\texttt{a,ab}], [\texttt{a}, \texttt{ab}\cdot \texttt{a}]\}$. We do not include \texttt{EOS} in $\mathcal{A}$ for simplicity.
\end{example}
Finally, we note that one can efficiently search for all elements within this set with linear time complexity $\mathcal{O}(|\vec{\mathbf{e}}_{\mathcal{V}_{M'}}|)$ by leveraging the valid encoding condition (\ref{valid_cond}). For details, see Appendix~\ref{sec_sampling}.

\section{Cross-Tokenizer Likelihood Scoring Algorithms}\label{sec:algorithms}

This section provides a mathematical formalization of cross-tokenizer scoring (conversion) and presents the algorithms that enable this operation. The derivation of these algorithms makes essential use of the relative alphabet introduced in Section~\ref{sec:rel_alphabet}.



{\subsection{Problem Setup}}

\begin{figure}[t]
\centering

\begin{minipage}{0.4\linewidth}
\centering

\makebox[\linewidth]{\shortstack{\bfseries\scriptsize Prior Work \\ \scriptsize\citep{phan2025exact}}}

\vspace{6pt}

\begin{tikzpicture}

\node[
    draw=black!40,
    rounded corners=4pt,
    inner sep=4pt
] (outer) {

    \begin{tikzpicture}[
        boxA/.style={draw=green!60!black, fill=green!18, rounded corners=3pt, inner sep=4pt, font=\footnotesize},
        boxB/.style={draw=orange!40, fill=orange!20, rounded corners=3pt, inner sep=4pt, font=\footnotesize},
        arr/.style={->, thick},
        smalllab/.style={font=\scriptsize, text=black!70}
    ]

    \node[boxA] (vm) at (0,1.2) {$\mathcal{V}$};
    \node[boxB, below=8mm of vm] (v0) {$\mathcal{A}$};
    \draw[arr] (vm) -- (v0);

    \node[smalllab, below=1mm of v0, align=center]
        {Original Vocab \\ $\to$ Byte Alphabet};

    \end{tikzpicture}

};

\end{tikzpicture}

\end{minipage}
\hspace{-25pt}
\begin{minipage}{0.5\linewidth}
\centering
\makebox[\linewidth]{\shortstack{\bfseries\scriptsize \quad  \quad \quad  Relative Alphabet Framework for BPE Tokenizer (Ours) \\ \quad }}
\vspace{1pt}

\begin{tikzpicture}[
    boxC/.style={draw=blue!60!black, fill=blue!15, rounded corners=3pt, inner sep=4pt, font=\footnotesize},
    boxD/.style={draw=red!60!black,  fill=red!15,  rounded corners=3pt, inner sep=4pt, font=\footnotesize},
    arr/.style={->, thick},
    smalllab/.style={font=\scriptsize, text=black!70}
]






\node[boxC] (va) {$\mathcal{V}_\alpha$};
\node[boxD, below=8.25mm of va] (vb) {$\mathcal{V}_\beta$};

\draw[->, thick, black!70, dashed]
    (va.south) -- (vb.north);

\node[smalllab, anchor=east, text=red!70, align=center]
    at ($(va.south)!0.5!(vb.north)$)
    {Directly \\ Intractable};

\node[boxC, draw=orange!40, fill=orange!20, right=6mm of va] (Vzero) {$\mathcal{A}$};

\draw[->, thick] (va.east) -- (Vzero.west);

\node[boxD, draw=gray!40, fill=gray!20, right=6mm of Vzero] (Vonebeta) {$\mathcal{V}_{1,\beta}$};
\draw[->, thick] (Vzero.east) -- (Vonebeta.west);

\node[smalllab, right=3mm of Vonebeta] (dotsG) {$\cdots$};
\draw[->, thick] (Vonebeta.east) -- (dotsG.west);

\node[boxD, draw=gray!40, fill=gray!20, right=4mm of dotsG] (ViFinal) {$\mathcal{V}_{i,\beta}$};
\draw[->, thick] (dotsG.east) -- (ViFinal.west);

\node[boxD, draw=gray!40, fill=gray!20, below=8mm of ViFinal] (Vip1) {$\mathcal{V}_{i+1,\beta}$};

\draw[->, thick] (ViFinal.south) -- (Vip1.north);

\node[boxD, draw=gray!40, fill=gray!20, left=6mm of Vip1] (Vip2) {$\mathcal{V}_{i+2,\beta}$};

\draw[->, thick] (Vip1.west) -- (Vip2.east);

\node[smalllab, left=6mm of Vip2] (dotsLeft) {$\cdots$};

\draw[->, thick] (Vip2.west) -- (dotsLeft.east);

\draw[->, thick] (dotsLeft.west) -- (vb.east);
\node[smalllab, below=1mm of Vip2, align=center, xshift=-6mm]
    {General Case Conversion between two BPE Vocabularies\\by Recursively Applying the Adjacency Case, see Lemma \ref{adjencent_cons}.};

\end{tikzpicture}

\end{minipage}

\caption{
\textbf{Left:} Prior work \citep{phan2025exact} supports tractable conversion only from the full BPE vocabulary to the byte-level alphabet. 
\textbf{Right:} Our relative alphabet framework in Definition~\ref{def:relative_alphabet} enables conversion between any pair of BPE vocabularies by first mapping tokens to their byte-level representation, and then recursively applying reverse conversions between adjacent vocabularies (see Section~\ref{sub2full} and Lemma~\ref{adjencent_cons}), where $\mathcal{V}_{i,\beta}$ are the sub-vocabs containing the first $i$ merges of $\mathcal{V}_\beta$.
}
\vspace{-10pt}

\label{fig:vocab_mappings}
\end{figure}

\paragraph{Cross-Tokenizer Scoring (Conversion).}
Let $\mathcal{V}_{\alpha}$ and $\mathcal{V}_{\beta}$ be two BPE vocabularies built over the same base alphabet.
We consider a language model $P_{\mathcal{V}_{\alpha}}$ trained with text tokenized under $\mathcal{V}_{\alpha}$.
Given an encoding $\vec{\mathbf{e}}_{\mathcal{V}_{\beta}} \in \mathcal{V}_{\beta}^{*}$, our goal is to evaluate the probability,
under $P_{\mathcal{V}_{\alpha}}$, of generating text whose $\mathcal{V}_{\beta}$-encoding begins with
$\vec{\mathbf{e}}_{\mathcal{V}_{\beta}}$. Formally, let $\vec{\mathbf{e}}'_{\mathcal{V}_A} \sim P_{\mathcal{V}_{\alpha}} (\cdot)$ be the (random) token sequence obtained by
autoregressively sampling from $P_{\mathcal{V}_{\alpha}}$ until the first occurrence of an
$\texttt{<EOS>}$ token, i.e. $\vec{\mathbf{e}}'_{\mathcal{V}_A}[-1] {=} \texttt{<EOS>}$, and define the associated (random) byte sequence
\[
    \vec{\mathbf{x}}
    \;\triangleq\;
    \mathrm{decode}_{\mathcal{V}_{\alpha}}(\vec{\mathbf{e}}'_{\mathcal{V}_A} ).
\]
{The \emph{conversion probability} of the prefix
$\vec{\mathbf{e}}_{\mathcal{V}_{\beta}}$ is then}
\[
    P_{\mathcal{V}_{\alpha}}\!\left(\vec{\mathbf{e}}_{\mathcal{V}_{\beta}}\right)
    \;\triangleq\;
    \Pr\!\left[
        \mathrm{encode}_{\mathcal{V}_{\beta}}(\vec{\mathbf{x}})
        \text{ begins with }
        \vec{\mathbf{e}}_{\mathcal{V}_{\beta}}
    \right].
\]
{That is, $P_{\mathcal{V}_{\alpha}}(\vec{\mathbf{e}}_{\mathcal{V}_{\beta}})$ denotes the probability that a sample from
$P_{\mathcal{V}_{\alpha}}$, after decoding to its underlying byte sequence and re-encoding under $\mathcal{V}_{\beta}$,
has $\vec{\mathbf{e}}_{\mathcal{V}_{\beta}}$ as a prefix.}

{In the conversion task,  we would like to express $P_{\mathcal{V}_{\alpha}}(\vec{\mathbf{e}}_{\mathcal{V}_{\beta}})$ as a functions $f$ of  $P_{\mathcal{V}_{\alpha}}(\vec{\mathbf{e}}_{i,\mathcal{V}_{\alpha}})$ for $i=1,2,...$ where $\vec{\mathbf{e}}_{i,\mathcal{V}_{\alpha}} \in \mathcal{V}^*_{\alpha}$ are encodings within $\mathcal{V}_\alpha$, in particular:} 
\begin{equation}
    P_{\mathcal{V}_{\alpha}}(\vec{\mathbf{e}}_{\mathcal{V}_{\beta}})
    = f\!\Big(P_{\mathcal{V}_{\alpha}}(\vec{\mathbf{e}}_{1,\mathcal{V}_{\alpha}}), \,
              P_{\mathcal{V}_{\alpha}}(\vec{\mathbf{e}}_{2,\mathcal{V}_{\alpha}}), \, \dots \Big), \notag
\end{equation}
{where each term $P_{\mathcal{V}_{\alpha}}(\vec{\mathbf{e}}_{i,\mathcal{V}_{\alpha}})$
is the standard autoregressive likelihood assigned by the model $P_{\mathcal{V}_{\alpha}}$ to the $\mathcal{V}_{\alpha}$-token sequence
$\vec{\mathbf{e}}_{i,\mathcal{V}_{\alpha}}$, and can therefore be
computed directly via a single forward pass of the model. Once this representation is established, next-token probabilities can be 
obtained directly via Bayes’ rule. We illustrate the nontrivial nature of this task with 
the following example.} 

\begin{example}\label{ex1}
Let $P_{\mathcal{V}_{\alpha}}$ operating over vocabulary $\mathcal{V}_{\alpha}{=}\{\texttt{a}, \texttt{b}\}$ and the target $ \mathcal{V}_{\beta}{=}\{\texttt{a}, \texttt{b}, \texttt{a}\cdot \texttt{b} \}$. Suppose $\vec{\mathbf{e}}_{\mathcal{V}_{\beta}}{=}[\texttt{b},  \texttt{a}, \texttt{b} ]$, if we compute $P_{\mathcal{V}_{\alpha}}(\vec{\mathbf{e}}_{\mathcal{V}_{\beta}}){=}P_{\mathcal{V}_{\alpha}}(\texttt{b}) P_{\mathcal{V}_{\alpha}}(\texttt{a}|\texttt{b}) P_{\mathcal{V}_{\alpha}}(\texttt{b}|\texttt{a},\texttt{b})$ then we will end up with some positive probability if conditional probability is nonzero. This, however, is incorrect since the encoding $\vec{\mathbf{e}}_{\mathcal{V}_{\beta}}{=}[\texttt{b},  \texttt{a}, \texttt{b} ]$ never appears in the tokenized text distribution  under $\mathcal{V}_{\beta}$, i.e. invalid encoding, and thus its probability is actually $0.0$, i.e., see tokenization bias in \citep{phan2025exact}. 
\end{example}

\paragraph{Algorithm Overview. } Let $\mathcal{V}_M$ be a BPE vocabulary and  $\mathcal{V}_{M'} \preceq \mathcal{V}_M$ be one of its subsets.  We will describe in details two types of conversions:
\begin{enumerate}
    \item \textbf{Full-to-Subset Conversion:} Converts a language model $P_M$ trained on $\mathcal{V}_M$ to a statistically equivalent model $P_{M'}$ that operates on $\mathcal{V}_{M'}$.
    \item \textbf{Subset-to-Full Conversion:} Converts a model $P_{M'}$ trained on $\mathcal{V}_{M'}$ to a statistically equivalent model $P_M$ that uses the full vocabulary $\mathcal{V}_M$.
\end{enumerate}

For the general case with any pair of vocabularies $\mathcal{V}_{\alpha}$ and $\mathcal{V}_{\beta}$,  we can:
\begin{enumerate}
    \item Convert the model defined on $\mathcal{V}_{\alpha}$ to its byte-level equivalent on $\mathcal{V}_0$;
    \item Convert the byte-level model on $\mathcal{V}_0$ to operate on the target vocabulary $\mathcal{V}_{\beta}$.
\end{enumerate}
Combining the two scenarios, we can convert between any two BPE vocabularies $\mathcal{V}_\alpha$ and $\mathcal{V}_\beta$ that share the same base alphabet $\mathcal{A}$. We summarize this whole framework in Figure \ref{fig:vocab_mappings}.



\subsection{Full-to-Subset Conversion}\label{full2sub}

Section \ref{sec:rel_alphabet} establishes that any subset vocabulary $\mathcal{V}_{M'}$ can be viewed an alphabet of $\mathcal{V}_M$. Thus, we can now leverage the previous result from \citep{phan2025exact}, which converts tokenized LMs to byte-level (alphabet) LMs. In particular, we obtain the following conversion result that allows us to convert any model $P_{\mathcal{V}_M}$ over $\mathcal{V}_M$ to $P_{\mathcal{V}_{M'}}$ over $\mathcal{V}_{M'}$.

\begin{lemma} \label{backward_lemma}
Let $\mathcal{V}_{M'} \preceq \mathcal{V}_M$ with the based alphabet  $\mathcal{A}$, and let $P(\cdot)$ be a language model over any arbitrary vocabulary $\mathcal{V}$ with $\mathcal{A} \subset \mathcal{V}$. Given a valid encoding $\vec{\mathbf{e}}_{\mathcal{V}_{M'}}$ over $\mathcal{V}_{M'}$, the probability that the LM $P(.)$  generates a string whose encoding under $\mathcal{V}_{M'}$ begins with $\vec{\mathbf{e}}_{\mathcal{V}_{M'}}$ is given by:
\begin{equation}
    P(\vec{\mathbf{e}}_{\mathcal{V}_{M'}}) 
    = \sum_{\vec{\mathbf{e}}_{\mathcal{V}_M} \in \mathcal{C}} 
    P(\vec{\mathbf{e}}_{\mathcal{V}_M}), \label{lemma1_eq}
\end{equation}
where $\mathcal{C} \triangleq  \mathrm{cover}_{M' \to M}(\vec{\mathbf{e}}_{\mathcal{V}_{M'}})$ is the set of relative cover encodings in Definition \ref{def:relative_cover_encs}. 
\end{lemma}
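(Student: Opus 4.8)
The plan is to eliminate the language model from the statement and reduce (\ref{lemma1_eq}) to a purely combinatorial identity about the two tokenizations of a single byte sequence. Every term in (\ref{lemma1_eq}) is to be read, exactly as in the Problem Setup, as a probability under the pushforward of $P$ to the space of (infinite) byte sequences: writing $\vec{\mathbf{x}}$ for the byte sequence obtained by decoding a sample from $P$, we have $P(\vec{\mathbf{e}}_{\mathcal{V}_{M'}}) = \Pr_{\vec{\mathbf{x}}}[\mathrm{encode}(\vec{\mathbf{x}};M')\text{ begins with }\vec{\mathbf{e}}_{\mathcal{V}_{M'}}]$ and likewise $P(\vec{\mathbf{e}}_{\mathcal{V}_M}) = \Pr_{\vec{\mathbf{x}}}[\mathrm{encode}(\vec{\mathbf{x}};M)\text{ begins with }\vec{\mathbf{e}}_{\mathcal{V}_M}]$, so the particular vocabulary $\mathcal{V}$ of $P$ never enters. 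Thus it suffices to prove the set identity
\[
\bigl\{\,\mathrm{encode}(\vec{\mathbf{x}};M')\text{ begins with }\vec{\mathbf{e}}_{\mathcal{V}_{M'}}\,\bigr\}
\;=\;
\bigsqcup_{\vec{\mathbf{e}}_{\mathcal{V}_M}\in\mathcal{C}}
\bigl\{\,\mathrm{encode}(\vec{\mathbf{x}};M)\text{ begins with }\vec{\mathbf{e}}_{\mathcal{V}_M}\,\bigr\}
\]
for every $\vec{\mathbf{x}}$, where the union is \emph{disjoint}; countable additivity of $\Pr$ then yields (\ref{lemma1_eq}) immediately.

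The structural fact I would use throughout is that, by the relative-alphabet machinery of Section~\ref{sec:rel_alphabet}, $\mathrm{encode}(\vec{\mathbf{x}};M')=\mathrm{decode}_{M\to M'}(\mathrm{encode}(\vec{\mathbf{x}};M))$ with $\mathrm{decode}_{M\to M'}=\mathrm{demerge}_{M'+1}\circ\cdots\circ\mathrm{demerge}_{M}$, and that each $\mathrm{demerge}_i$ rewrites its input one token at a time from left to right; hence $\mathrm{decode}_{M\to M'}$ is concatenative and maps prefixes to prefixes. For the inclusion $\supseteq$: if $\vec{\mathbf{e}}_{\mathcal{V}_M}\in\mathcal{C}$ is a prefix of $\mathrm{encode}(\vec{\mathbf{x}};M)$, then condition~(2) of Definition~\ref{def:relative_cover_encs} gives $\mathrm{decode}_{M\to M'}(\vec{\mathbf{e}}_{\mathcal{V}_M}[1{:}|\vec{\mathbf{e}}_{\mathcal{V}_M}|-1])=\vec{\mathbf{e}}_{\mathcal{V}_{M'}}[1{:}i-1]$ while $\vec{\mathbf{e}}_{\mathcal{V}_{M'}}[i{:}]$ is a prefix of $\mathrm{decode}_{M\to M'}(\vec{\mathbf{e}}_{\mathcal{V}_M}[-1])$; concatenativity then makes $\mathrm{decode}_{M\to M'}(\vec{\mathbf{e}}_{\mathcal{V}_M})$ begin with $\vec{\mathbf{e}}_{\mathcal{V}_{M'}}$, and since $\mathrm{decode}_{M\to M'}$ preserves prefixes and sends $\mathrm{encode}(\vec{\mathbf{x}};M)$ to $\mathrm{encode}(\vec{\mathbf{x}};M')$, the latter begins with $\vec{\mathbf{e}}_{\mathcal{V}_{M'}}$ as well. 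For $\subseteq$: assuming $\mathrm{encode}(\vec{\mathbf{x}};M')$ begins with $\vec{\mathbf{e}}_{\mathcal{V}_{M'}}$, take $\vec{\mathbf{e}}_{\mathcal{V}_M}$ to be the shortest prefix of $\mathrm{encode}(\vec{\mathbf{x}};M)$ whose image under $\mathrm{decode}_{M\to M'}$ has length at least $|\vec{\mathbf{e}}_{\mathcal{V}_{M'}}|$ (such a prefix exists because the image of the whole encoding is $\mathrm{encode}(\vec{\mathbf{x}};M')$). By minimality, dropping the last token gives a strictly shorter image, which — being a prefix of $\mathrm{encode}(\vec{\mathbf{x}};M')$ and hence of $\vec{\mathbf{e}}_{\mathcal{V}_{M'}}$ — equals some $\vec{\mathbf{e}}_{\mathcal{V}_{M'}}[1{:}i-1]$ with $i\le|\vec{\mathbf{e}}_{\mathcal{V}_{M'}}|$; concatenativity forces the remaining tail $\vec{\mathbf{e}}_{\mathcal{V}_{M'}}[i{:}]$ to be a prefix of the demerge of the last token alone, giving condition~(2), while condition~(1) holds because $\vec{\mathbf{e}}_{\mathcal{V}_M}$ is literally a prefix of the canonical encoding $\mathrm{encode}(\vec{\mathbf{x}};M)$. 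Hence $\vec{\mathbf{e}}_{\mathcal{V}_M}\in\mathcal{C}$. Disjointness I would argue by contradiction: if distinct $\vec{\mathbf{e}}_1,\vec{\mathbf{e}}_2\in\mathcal{C}$ were both prefixes of $\mathrm{encode}(\vec{\mathbf{x}};M)$, say $\vec{\mathbf{e}}_1\subsetneq\vec{\mathbf{e}}_2$, then $\vec{\mathbf{e}}_1$ is a prefix of $\vec{\mathbf{e}}_2[1{:}|\vec{\mathbf{e}}_2|-1]$; condition~(2) for $\vec{\mathbf{e}}_1$ makes $\mathrm{decode}_{M\to M'}(\vec{\mathbf{e}}_1)$ contain all of $\vec{\mathbf{e}}_{\mathcal{V}_{M'}}$, so prefix-preservation forces $|\mathrm{decode}_{M\to M'}(\vec{\mathbf{e}}_2[1{:}|\vec{\mathbf{e}}_2|-1])|\ge|\vec{\mathbf{e}}_{\mathcal{V}_{M'}}|$, contradicting condition~(2) for $\vec{\mathbf{e}}_2$, which requires that length to be $i_2-1<|\vec{\mathbf{e}}_{\mathcal{V}_{M'}}|$.

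The step I expect to be the main obstacle is the boundary bookkeeping at the last token: rigorously justifying that truncating a \emph{canonical} $\mathcal{V}_M$-encoding and demerging can only ever produce a prefix of the canonical $\mathcal{V}_{M'}$-encoding — never an off-path, non-canonical sequence — which requires carrying the valid-encoding hypothesis (\ref{valid_cond}) through the demerge chain, together with the dual existence claim in the $\subseteq$ direction that the ``shortest covering prefix'' is well defined with no gap. A cleaner alternative, which I would also record, is that Definition~\ref{def:relative_alphabet} makes the pair $(\mathcal{V}_{M'},\mathcal{V}_M)$ satisfy exactly the hypotheses that the byte-level conversion result of \citep{phan2025exact} uses for $(\mathcal{A},\mathcal{V})$ — a fixed sequence of merges relating the two, \texttt{EOS} kept standalone, canonical outputs — so that their proof transfers verbatim with $\mathcal{A}$ replaced by the relative alphabet $\mathcal{V}_{M'}$; the argument above is then simply that proof re-expressed in the relative-alphabet notation, and this is the route I would take for the write-up.
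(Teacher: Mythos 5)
Your proposal is correct and ends up on the same route as the paper: the paper's proof of Lemma~\ref{backward_lemma} is exactly the one-line reduction you describe in your final paragraph, namely that $\mathcal{V}_{M'}$ plays the role of the alphabet in the Byte-Token Representation Lemma of \citet{phan2025exact}, so the cover-set marginalization transfers verbatim. Your detailed disjoint-partition argument (the set identity over byte sequences, with concatenativity of $\mathrm{decode}_{M\to M'}$, the shortest-covering-prefix construction for $\subseteq$, and the length contradiction for disjointness) is a sound and more self-contained expansion of what the paper leaves implicit in that citation.
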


\begin{proof}
See Appendix \ref{proof_backward_lemma}.
\end{proof}

From Lemma~\ref{backward_lemma}, LMs defined over the vocabulary $\mathcal{V}_M$ can be directly converted to operate on any sub-vocabulary $\mathcal{V}_{M'}$ by substituting $P(\cdot)$ with $P_M(\cdot)$. We conclude this full-to-subset conversion procedure with a discussion of the computational complexity of next-token sampling in this setting.

\begin{remark} Lemma \ref{backward_lemma} can be seen as a generalization of the corresponding Lemma 1 in \citep{phan2025exact} by replacing $\mathcal{V}_{M'} = \mathcal{A}$. Furthermore, given that they share the same marginalization structure through the establishment of cover encodings, it is thus possible to perform autoregressive sampling with $\mathcal{O}(1)$ complexity in terms of model runs, allowing direct distillation for vocabulary trimming problems.  For details, see Appendix~\ref{sec_sampling}.
\end{remark}

\begin{example}
    Following the setup in Example \ref{example2}, then we can express $P_{\mathcal{V}_1}([\texttt{a}, \texttt{ab}])$, i.e. the probability of encoding within $\mathcal{V}_1$ as the sum of  $P_{\mathcal{V}_2}([\texttt{a}, \texttt{ab}])$ and $P_{\mathcal{V}_2}([\texttt{a}, \texttt{aba}])$.
\end{example}

\subsection{Subset-to-Full Conversion}\label{sub2full}
For this scenario, it is unclear whether similar direct marginalization is possible. Consider the problem of converting $\mathcal{V}_0$ to $\mathcal{V}_M$, one idea is to perform the following marginalization:
\begin{align}
    P_{0}(\vec{\mathbf{e}}_{\mathcal{V}_M}) 
    = \sum_{|s| = \mathrm{L}}^{\infty} 
      P_0(s \cdot \texttt{EOS})\,
      \mathbf{1}\!\left\{\vec{\mathbf{e}}_{\mathcal{V}_M} = \mathrm{encode}_M(s \cdot \texttt{EOS})[1:|\vec{\mathbf{e}}_{\mathcal{V}_M}|]\right\}
\end{align}
where $\mathrm{L} = |\mathrm{decode}_{M\to 0}(\vec{\mathbf{e}}_{\mathcal{V}_M})|$. That is, we sum over the probability of the string  $s$  ending with \texttt{EOS},  whose encoding under $\mathcal{V}_M$ begins with the desired prefix $\vec{\mathbf{e}}_M$, ranging over arbitrarily long sequences. Carrying out this marginalization requires enumerating such strings without any \emph{a priori} stopping bound, prompting a question of whether there is a finite-time procedure for this conversion problem. Indeed, by again leveraging the sequential decoding structure, we find that such a procedure exists and one can compute $P_{0}(\vec{\mathbf{e}}_M) $  within finite enumeration.  In particular, we adopt a recursive construction approach and iteratively convert encodings across incremental vocabularies where the conversion is tractable for each intermediate step. Specifically, let $\vec{\mathbf{e}}_{\mathcal{V}_{M'+1}}$ be an encoding from  ${\mathcal{V}_{M'+1}}$ and recall its construction:
\begin{equation}
    \mathcal{V}_{M'+1} = \mathcal{V}_{M'} . \mathrm{append} (t_{M'+1})  \quad \text{where: }  t_{M'+1} = t^{\text{left}}_{M'+1} \cdot t^{\text{right}}_{M'+1}
\end{equation}
 
\begin{algorithm}[t]
\caption{Recursive Vocabulary Conversion}
\label{alg:recursive_conversion}
\KwIn{Encoding $\vec{\mathbf{e}}_{\mathcal{V}_M}$ over vocabulary $\mathcal{V}_M$; language model $P$ defined over $\mathcal{V}_{M'} \preceq \mathcal{V}_M$}
\KwOut{Probability $P(\vec{\mathbf{e}}_{\mathcal{V}_M})$ under $P$}

\SetKwFunction{FMain}{ConvertProb}
\SetKwProg{Fn}{Function}{:}{\KwRet}
\Fn{\FMain{$\vec{\mathbf{e}}, M$}}{
    \If{$M = M'$}{
        \Return $P_{M'}(\vec{\mathbf{e}})$\;
    }

    Let $t_M = t^{\mathrm{left}}_M \cdot t^{\mathrm{right}}_M$ be the token added at step $M$\;
    $\vec{\mathbf{e}}_{\mathrm{dec}} \gets \mathrm{decode}_{M \to M-1}(\vec{\mathbf{e}})$\;
    $\vec{\mathbf{e}}_{\mathrm{alt}} \gets \vec{\mathbf{e}}_{\mathrm{dec}} \cdot t^{\mathrm{right}}_M$\;

    \If{$\vec{\mathbf{e}}[-1] \neq t^{\mathrm{left}}_M$}{
        \Return \FMain{$\vec{\mathbf{e}}_{\mathrm{dec}}, M{-}1$}\;
    }
    \Return \FMain{$\vec{\mathbf{e}}_{\mathrm{dec}}, M{-}1$}
    $-$
    \FMain{$\vec{\mathbf{e}}_{\mathrm{alt}}, M{-}1$}\;
}
\end{algorithm}

Suppose the last token of $\vec{\mathbf{e}}_{\mathcal{V}_{M'+1}}[-1] \neq t^{\text{left}}_{M'+1}$   then we have the cover encoding set size $|\mathcal{C}| = 1$ in Equation (\ref{lemma1_eq}) since there is no token in $\mathcal{V}_{M'+1}$ can contain $\vec{\mathbf{e}}_{\mathcal{V}_{M'+1}}[-1]$ except itself. Thus, let $\vec{\mathbf{e}}_{\mathcal{V}_{M'}} = \mathrm{decode}_{M'+1\to M'}(\vec{\mathbf{e}}_{\mathcal{V}_{M'+1}}) $, then we have:
\begin{equation}
    P_{M'}(\vec{\mathbf{e}}_{\mathcal{V}_{M'+1}}) = P_{M'}(\vec{\mathbf{e}}_{\mathcal{V}_{M'}})
\end{equation}
If $\vec{\mathbf{e}}_{\mathcal{V}_{M'+1}}[-1] = t^{\text{left}}_{M'+1}$ then besides $t^{\text{left}}_{M'+1}$,  we also have $t_{M'+1}$ containing $\vec{\mathbf{e}}_{\mathcal{V}_{M'+1}}[-1]$, thus:\begin{equation}
    P_{M'}(\vec{\mathbf{e}}_{\mathcal{V}_{M'+1}}) = P_{M'}(\vec{\mathbf{e}}_{\mathcal{V}_{M'}}) - P_{M'}(\vec{\mathbf{e}}_{\mathcal{V}_{M'}} \cdot t^{\text{right}}_{M'+1}),
\end{equation}
and thus the recursion relation can be established, shown in Algorithm~\ref{alg:recursive_conversion}.  The correctness proof is in Appendix \ref{proof_adjencent_cons} and it is guaranteed to terminate in finite time: each recursive call steps back to the immediately preceding vocabulary, progressing toward $\mathcal{V}_{M'}$. Nevertheless, its worst-case cost is $\mathcal{O}\!\left(\exp({\,M-M'})\right)$ evaluations of $P_{M'}$, which is prohibitive. In practice, however, most leaves contribute negligibly because (i) well-trained LMs concentrate next-token probability on a small subset of candidates, and (ii) the vast majority of leaf encodings are semantically or syntactically implausible and therefore receive vanishing probability. We thus can prune the recursion by restricting to high-probability continuations and  exploiting pre-tokenization patterns. A complete description is in Appendix~\ref{approx_procedure}, with empirical results for approximation quality in Section \ref{exp:approx}.
\begin{example}
    Following the setup in Example \ref{example2},  say that we want to compute $P_{ \mathcal{V}_2}([\texttt{a}, \texttt{ab}])$. Running the algorithm step by step, we have:
    \begin{enumerate}
        \item Recursion at $\mathcal{V}_2$: since in $\mathcal{V}_2$, the token \texttt{ab} is a part of the final token  $\texttt{ab}\cdot \texttt{a}$, we branch out:
        $$P_{\mathcal{V}_2}([\texttt{a}, \texttt{ab}]) =  P_{\mathcal{V}_1}([\texttt{a}, \texttt{ab}]) - P_{\mathcal{V}_1}([\texttt{a}, \texttt{ab}, \texttt{a}])$$
        \item Recursion at $\mathcal{V}_1$:  since in $\mathcal{V}_1$, the token \texttt{a} in the second term of the above equation is a part of the final token  $\texttt{a}\cdot \texttt{b}$, we branch out again while keeping the first term intact:
        \begin{align}
        P_{\mathcal{V}_2}([\texttt{a}, \texttt{ab}]) &=  P_{\mathcal{V}_0}([\texttt{a}, \texttt{a}, \texttt{b}]) - \left[ P_{\mathcal{V}_0}([\texttt{a}, \texttt{a}, \texttt{b}, \texttt{a}]) - P_{\mathcal{V}_0}([\texttt{a}, \texttt{a}, \texttt{b}, \texttt{a}, \texttt{b}]) \right] \notag \\
        &= P_{\mathcal{V}_0}([\texttt{a}, \texttt{a}, \texttt{b}]) - P_{\mathcal{V}_0}([\texttt{a}, \texttt{a}, \texttt{b},  \texttt{a},   \texttt{a}]) \notag
    \end{align}   
    \end{enumerate}
Intuitively, there are four possibilities for the next two  characters of the string \texttt{aab}, namely \texttt{aa}, \texttt{ab}, \texttt{ba} and \texttt{bb}. Consider the string \texttt{aabaa}, one can see that, regardless of any continuation after, the first two tokens under $\mathcal{V}_2$ is still [\texttt{a},\texttt{aba}] due to the last character \texttt{a} acts as a buffer guaranteeing the merge for token \texttt{aba}. For the string \texttt{aabab}, the last character \texttt{b} prevents the merge of \texttt{aba} and thus guarantee to form the desired encoding [\texttt{a}, \texttt{ab}]. Finally, the strings \texttt{aabba} and \texttt{aabbb} must have  [\texttt{a},\texttt{ab}] as the first two tokens under $\mathcal{V}_2$, thus consistent with final result. 
\end{example}





\section{Related Work}\label{related}
\textbf{Cross-Tokenizer Sampling and Scoring.} There has been growing interest in sampling from language models under tokenizers different from the ones they were originally trained on. \citet{phan2025exact} introduce the concept of \emph{tokenization bias}, explaining why tokenized LMs can fail to produce correct outputs when prompts end with partial tokens. They further propose an algorithm that converts tokenized LMs into equivalent byte-level models, achieving constant-time complexity of $\mathcal{O}(1)$ (model runs) for next-token sampling. Similar studies \citep{vieiralanguage, hayase2025sampling} also tackle this problem, using slightly different  formulations and algorithmic techniques. Our work generalizes these prior approaches along two dimensions. First, whereas existing methods are restricted to conversions from a full vocabulary to the byte-level alphabet $\mathcal{A}$, our formulation in Section~\ref{full2sub} applies to \emph{any} subset vocabulary $\mathcal{V}_i \preceq \mathcal{V}_M$. Building on this formulation, we further develop algorithms for the cross-tokenizer conversion setting, see Section~\ref{sub2full}, enabling conversions between arbitrary pairs of BPE tokenizers. To the best of our knowledge, existing results for this setting have only been partially explored in \cite{phan2024understanding} for maximum-prefix encoding, which has been largely replaced by BPE. Second, at the level of analysis, we introduce  new technical tools to characterize such conversions; e.g., the notion of \emph{relative cover encodings} (Definition~\ref{def:relative_cover_encs}) generalizes the cover encoding of \citet{phan2025exact} beyond the byte-level vocabulary $\mathcal{A}$.


\textbf{Cross-Tokenizer Distillation.}  
Cross-tokenizer distillation seeks to transfer knowledge across models built on distinct vocabularies. Prior work \citep{boizardtowards, cui2025multi, feng2024adapting} has approached this by applying optimal transport to align teacher and student logits, thereby providing vocabulary-agnostic supervision. A complementary line \citep{wan2024knowledge, minixhofer2025universal, chen2025enhancing} instead reduces distributional divergence by approximating marginalized probabilities through sequence alignment. Ultimately, these methods either impose strong assumptions on the logit representation or rely on heuristic approximations of sequence likelihoods. In contrast, our framework derives directly from first principles: we develop a probabilistic formulation that computes exact sequence likelihoods under arbitrary BPE vocabularies, enabling lossless and principled cross-tokenizer distillation.


\section{Experiments}\label{experiment}
\subsection{Cross-Tokenizer Distillation}\label{exp:approx}

\begin{figure}[t]
  \centering
  \begin{minipage}[t]{0.35\textwidth}
    \centering
    \includegraphics[width=\linewidth]{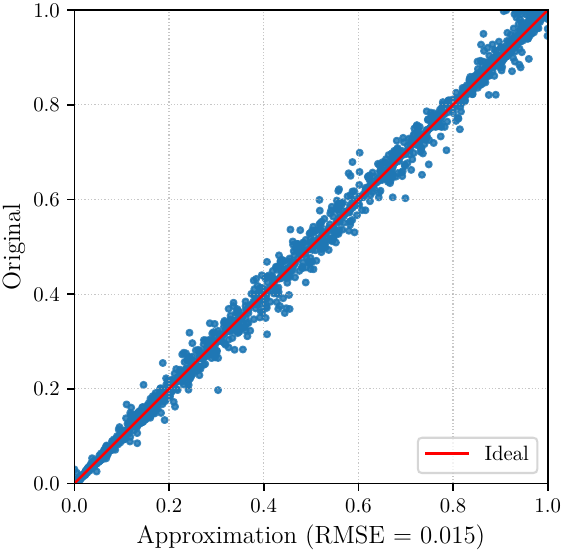}
    \captionof{figure}{Empirical approximation of next-token probabilities.}
    \label{fig:approx}
  \end{minipage}
  \hspace{10pt}
  \begin{minipage}[t]{0.54\textwidth}
    \vspace{-133pt}
    \centering
    \small
    \setlength{\tabcolsep}{7pt}
    \renewcommand{\arraystretch}{0.96}
    \begin{tabular}{@{}lc@{}}
      \toprule
      \textbf{Method} & \textbf{Accuracy (5-shot)} \\
      \midrule
      Gemma2-2B-Instruct (Student)                  & 52.3 \\
      Qwen2.5-Math-7B-Instruct (Teacher)            & 88.4 \\
      \midrule
      SFT                                            & 47.9 \\
      ULD \citep{boizardtowards}                     & 47.1 \\
      ULD + SFT \citep{boizardtowards}               & 48.2 \\
      DSKD \citep{zhang2024dual}                     & 51.5 \\
      DSKD + SFT \citep{zhang2024dual}               & 52.8 \\
      ALM                                            & 53.2 \\
      ALM + SFT                                      & 53.5 \\
      PKL (Ours)                                     & 54.6 \\
      SFT + PKL (Ours)                               & \textbf{55.6} \\
      \bottomrule
    \end{tabular}
    \captionof{table}{GSM8K distillation (Qwen2.5-Math-7B $\rightarrow$ Gemma2-2B).}
    \label{tab:distill-gsm8k}
  \end{minipage}
  \vspace{-8pt}
\end{figure}


\textbf{Approximation Quality. } We begin with validating the approximation scheme mentioned in Section \ref{sub2full}. In particular,  we convert the  Qwen2.5\textendash7B\textendash Instruct model to its equivalent byte-level model using the Full-to-Subset procedure, where we can compute the next-byte probability over the whole alphabet with $\mathcal{O}(1)$ complexity. We use this byte-level probabilities from this converted model to re-compute the next-token probability of the original Qwen2.5\textendash7B\textendash Instruct model using the approximation process in Appendix~\ref{approx_procedure}. Figure~\ref{fig:approx} shows the approximations closely match the token-level ground truths, confirming the correctness of the proposed method. This beam-search approximation amounts to evaluating approximately 6 beams, each with an average  length of 10, and takes roughly $0.5$ seconds per next-token evaluation, i.e. unlike the Full-to-Subset case, we only get the probability of one query token instead of the whole distribution over the subset vocabulary.

\paragraph{Distillation Results.} We train and  evaluate cross-tokenizer distillation on GSM8K using {Qwen2.5-Math-7B-Instruct} (teacher) and {Gemma-2-2B-Instruct} (student), chosen for their minimal vocabulary overlap. For each prompt, we score five tokens: the ground-truth, the student's top-1, and three additional candidate tokens obtained via beam search; see Appendix~\ref{approx_procedure}.  Training minimizes a {partial Kullback–Leibler divergence (PKL)}  (weight $\omega$, see Equation (\ref{pkl_loss}) in Appendix~\ref{sec:loss_fn}) plus a supervised-finetuning (SFT) term (weight $1-\omega$); we use PKL instead of forward KL because only a small subset of (1 to 5) token probabilities is available. To minimize training-time computation, we performed teacher inference offline, generating soft labels for 7,500 training examples. We show results for $\omega\in\{0.0,0.8,1.0\}$ (pure SFT, validated mix, pure PKL) and compare against {Approximate Likelihood Matching (ALM)} \citep{minixhofer2024zero}, the current state of the art for cross-tokenizer distillation. Models are trained end-to-end (without LoRA), using a batch size of $64$ and a learning rate of $5 \times 10^{-6}$. Table~\ref{tab:distill-gsm8k} shows our method outperforms others and achieves the best results when combined with SFT. Notably, our approach strictly generalizes ALM: it recovers ALM in the limit of ALM chunk size~$1$ with (near-)perfect debiasing, while additionally leveraging probabilities of non--ground-truth tokens. See Appendix \ref{add_exp} for results on translation task. 

\subsection{Vocabulary Trimming}
We study the performance of our approach for the Full-to-Subset conversion in the vocabulary trimming problem, i.e. to decrease the memory footprint of its language modeling head. We evaluate our method’s performance on domain-specific reasoning tasks, including arithmetic reasoning  and coding, alongside with multiple general-purpose reasoning tasks in Appendix \ref{add_exp}.

\paragraph{Setup.}\begin{wrapfigure}{r}{0.5\textwidth}
  \centering
  \centering
{\small
\setlength{\tabcolsep}{6pt}
\renewcommand{\arraystretch}{1.05}

\label{tab:vocab-trim}
\begin{tabular}{@{}lrrr@{}}
\toprule
\textbf{Vocab Size} & \textbf{LMH} & \textbf{LM Memory} & \textbf{Saved} \\
\midrule
16k   & 45 MB  & 2.48GB & 13.5\% \\
32k   & 91 MB  & 2.53GB & 12.0\% \\
64k   & 182 MB & 2.62GB & 9\% \\
151k  & 445 MB & 2.88GB & 0\%    \\
\bottomrule
\end{tabular}
\par \emph{LMH} = LM head memory.
\vspace{-5pt}
\captionof{table}{Memory impact of vocabulary trimming (bfloat16).}
\vspace{-5pt}
 }\end{wrapfigure} We utilize the Qwen2.5-1.5B-Instruct  model as the base for the student model, with an original vocabulary size of $151,643$, and truncate to the first $\{16,32,64\}\times 10^3$ merges. For both domain-specific and general-purpose reasoning tasks, we initiate a warm-up distillation phase for one epoch on  the Alpaca dataset~\citep{alpaca}, which contains approximately 50,000 instruction-following examples, using the Qwen2.5-7B-Instruct model as the teacher model. The primary goal is to enable the student model to redistribute probability mass over its reduced vocabulary.  We compare performance with $\omega=0.0$ (pure SFT), $\omega=1.0$ (pure distillation), and set $\omega=0.8$ for the combined loss after validation sweeps. Following warm-up, for mathematical reasoning, we distill the student for two epochs on the GSM8K training split \citep{cobbe2021training}, using {Qwen2.5-Math-7B-Instruct} as a domain-expert teacher. For the coding task, we distill the student models on the OPC dataset \citep{Huang2024OpenCoderTO}, employing the Qwen2.5-Coder-7B-Instruct model.  To minimize on-the-fly computation, we perform teacher inference offline and store the top-$K$ probabilities for distillation, with $K=20$. 

\paragraph{Results.} As shown in Table~\ref{tab:three-datasets}, our alignment method, consistent with the cross-tokenizer distillation setting, outperforms baselines by utilizing a well-aligned distillation objective. The distilled models maintain strong performance despite reduced vocabulary sizes. On GSM8K, the 32k variant improves over the original by nearly 5\% and surpasses SFT by approximately 9\%, while reducing the memory footprint by about 12\%. These results highlight the advantages of our algorithm and perfect sequence alignment for this problem.

\begin{table*}[t]
\centering
\scriptsize
\setlength{\tabcolsep}{4pt}
\renewcommand{\arraystretch}{1.15}

\begin{tabular}{lccccc ccccc ccccc}
\toprule
& \multicolumn{5}{c}{\textbf{GSM8K} (4-shot Acc)} 
& \multicolumn{5}{c}{\textbf{HumanEval} (0-shot)} 
& \multicolumn{5}{c}{\textbf{MBPP} (3-shot)} \\
\cmidrule(lr){2-6}\cmidrule(lr){7-11}\cmidrule(lr){12-16}
\textbf{Vocab} 
& \textbf{SFT} & \textbf{FKL} & \textbf{ALM} &  \textbf{ULD} &  \textbf{DSDK}
& \textbf{SFT} & \textbf{FKL} & \textbf{ALM} &  \textbf{ULD} &  \textbf{DSDK}
& \textbf{SFT} & \textbf{FKL} & \textbf{ALM} &  \textbf{ULD} & \textbf{DSDK} \\
\midrule
16k  & 53.2 & 56.5 & 57.0 &  51.2 &  52.1
     & 43.2 & 42.6 & 34.7 &  28.6 &  30.5
     & 30.8 & 40.8 & 34.8 &  32.2 &  33.3 \\
 (w/ SFT) 
     &      & 59.7 & \textbf{60.4} & 53.6 & 54.3
     &      & \textbf{46.4} & 30.5 &  31.1 &  32.3
     &      & \textbf{41.4} & 31.4 &  33.1 &  34.2 \\
\midrule
32k  & 54.0 & 58.6 & 57.1 &  53.5 &  54.4
     & 46.9 & \textbf{47.6} & 32.3 &  30.5 &  40.2
     & 38.8 & \textbf{41.8} & 35.6 &  33.5 &  34.6 \\
 (w/ SFT) 
     &      & \textbf{63.0} & 61.1 &  57.1 &  59.3
     &      & 47.6  & 39.6 &  34.1 &  41.4
     &      & 40.6 & 33.4 &  32.2 &  36.0 \\
\midrule
64k  & 54.5 & 61.8 & 56.2 &  55.5 &  57.3
     & 48.2 & \textbf{51.8} & 31.7 &  32.3 &  42.6
     & 39.9 & \textbf{44.6}  & 39.0 &  34.4 &  37.0 \\
 (w/ SFT) 
     &      & \textbf{62.8} & 61.5 &  58.4 &  60.2
     &      & 51.2 & 47.5  & 36.1 &  46.4
     &      & 43.0 & 36.4  &  35.2 &  38.7 \\
\midrule
Full Vocab 
     &  60.2     &    61.1  &  57.0 & \textcolor{blue}{58.1} & -
     &  49.4     & 50.6     &  42.6 & \textcolor{blue}{48.7} & -
     &   42.0   &  43.0    &  39.9 & \textcolor{blue}{41.2} & - \\
(w/ SFT) &     &     \textbf{63.2}  &  62.1  & - & -
     &      &  \textbf{52.4}    &  48.2 & - & -
     &      & \textbf{43.4}    &  42.4 & - & - \\
\bottomrule 
\end{tabular}
\caption{Performance on GSM8K, HumanEval, and MBPP (pass@1). Columns report SFT, FKL (our method, with/without SFT), and the baselines ALM (with/without SFT), ULD and DSDK. In the Full Vocab row, the \textcolor{blue}{blue} entry (ULD column) denotes the original model’s performance, not a ULD score. Our method consistently improves accuracy under the vocabulary trimming setup.}
\label{tab:three-datasets}
\vspace{-15pt}
\end{table*}





\section{Conclusion}
This paper shows cross-tokenizer scoring algorithms that compute sequence likelihoods for any BPE encoding, regardless of the language model's native tokenizer, with finite time termination.
For scenarios where the target vocabulary is a subset of the model's vocabulary, our approach delivers exact likelihoods and supports sequential sampling with only $\mathcal{O}(1)$ model evaluations per token. For arbitrary vocabularies, we propose a lossless method that leverages BPE's recursive structure, alongside a fast approximation designed for large vocabularies. Experimental results confirm robust cross-model  knowledge distillation across incompatible tokenizers, underscoring the versatility and precision of our algorithms. Future work will focus on optimizing the general-case algorithm and its approximation by reducing time and memory demands through techniques such as enhanced pruning or beam search. Moreover, our approach's flexibility, which avoids intermediate steps like sentence alignment used in other methods, makes it particularly well-suited for modern distillation techniques, such as on-policy distillation~\citep{agarwal2024policy}. Related problems also include cross-tokenizer preference optimization where the verifier can only output reward for a specific token.  Another promising direction involves applying our method to tokenization adaptation~\citep{zheng2024enhancing}, where we anticipate significant performance improvements in downstream tasks and target domains, further amplifying the practical impact of our algorithms. 

\section*{Acknowledgment}
Resources used in preparing this research were provided, in part, by the Province of Ontario, the Government of Canada through CIFAR, and companies sponsoring the Vector Institute \hyperlink{www.vectorinstitute.ai/partnerships/}{www.vectorinstitute.ai/partnerships/}.

\section*{Reproducibility Statement}
To ensure transparency and reproducibility, we will release all artifacts required to replicate our results, including environment specifications, training/evaluation scripts, and code. 

\section*{Statement on the Use of Large Language Models}
We did not use large language models for any part of this paper (including experiments, analyses, proofs, or writing). All conceptual contributions, study design, implementation, and writing were carried out by the authors.

\newpage
\bibliography{iclr2026_conference}
\bibliographystyle{iclr2026_conference}

\newpage
\appendix

\section{Efficient Next-Token Sampling for Full-to-Subset Conversion}\label{sec_sampling}
\subsection{Relative Cover Encoding}
We begin by recalling the definition of relative cover encodings.

\textbf{Relative Cover Encodings}
Let $\mathcal{V}_{M'} \preceq \mathcal{V}_M$, and let $\vec{\mathbf{e}}_{M'}$ be a valid encoding over the vocabulary $\mathcal{V}_{M'}$. We say that an encoding $\vec{\mathbf{e}}_{\mathcal{V}_M}$ over $\mathcal{V}_M$ is a \emph{relative cover encoding} of $\vec{\mathbf{e}}_{\mathcal{V}_{M'}}$ if the following conditions hold:
\begin{enumerate}
    \item There exists a sequence $\vec{\mathbf{x}}$ such that $\vec{\mathbf{e}}_{\mathcal{V}_M} = \mathrm{encode}(\vec{\mathbf{x}}; M)[:|\vec{\mathbf{e}}_{\mathcal{V}_M}|]$.
    \item There exists an index $i \leq |\vec{\mathbf{e}}_{\mathcal{V}_{M'}}|$ such that:
    \begin{align*}
        &\mathrm{decode}_{ M \to M'}(\vec{\mathbf{e}}_{\mathcal{V}_{M}}[1{:}|\vec{\mathbf{e}}_{\mathcal{V}_{M}}|-1]) = \vec{\mathbf{e}}_{\mathcal{V}_{M'}}[1{:}i-1], \\
        &\vec{\mathbf{e}}_{\mathcal{V}_{M'}}[i{:}] \in \mathrm{prefix}\left(\mathrm{decode}_{ M \to M'}(\vec{\mathbf{e}}_{\mathcal{V}_{M}}[-1])\right),
    \end{align*}
    i.e. the last token of $\vec{\mathbf{e}}_{\mathcal{V}_M}$ starts from some location within $\vec{\mathbf{e}}_{\mathcal{V}_{M'}}$.
\end{enumerate}


In BPE, any $\vec{\mathbf{e}}_{\mathcal{V}_M}$ must be  a valid encoding according to the first condition. 
Algorithm~\ref{alg:cover search} leverages this by first converting 
$\vec{\mathbf{e}}_{\mathcal{V}_{M'}}$ to its string form $s$. 
Because the final token of a valid cover must match a suffix of $s$, we traverse $s$ left-to-right: 
at each position $i$, select the candidates $\mathrm{right\_token}\in\mathcal{V}_M$ whose 
decoded string is a prefix of $s[i{:}]$, and concatenate it after 
$\mathrm{left\_tokens} \gets \mathrm{encode}\!\big(s[{:}i]\big)$ to form a candidate cover. 
We then set 
$s' \gets \mathrm{decode}\!\big(\mathrm{left\_tokens}\cdot \mathrm{right\_token}\big)$, 
re-encode $s'$ under the sub-vocabulary to obtain 
$\vec{\mathbf{e}}'_{\mathcal{V}_{M'}} \gets \mathrm{encode}(s';\,0\!\to\! M')$, 
and verify that the resulting encoding begins with $\vec{\mathbf{e}}_{\mathcal{V}_{M'}}$.

\begin{algorithm}[t]
\caption{Relative Cover Encoding Search}
\label{alg:cover search}
\KwIn{Encoding $\vec{\mathbf{e}}_{M'}$ over vocabulary $\mathcal{V}_{M'}$; target vocabulary $\mathcal{V}_M$;}

\SetKwFunction{FMain}{RelativeCoverSearch}
\SetKwProg{Fn}{Function}{:}{\KwRet}
\Fn{\FMain{$\vec{\mathbf{e}}_{\mathcal{V}_{M'}}, \mathcal{V}_M$}}{   
    $\mathrm{cover\_dict} = \{\}$ \\
    $s = \mathrm{decode}(\vec{\mathbf{e}}_{\mathcal{V}_{M'}}; M'\to 0)$\\
    \For{$i \gets 1$ \KwTo $|s|$}{
    \texttt{\#Find all tokens with prefix $s[i:]$ and tokenize $s[:i-1]$}\\
     $\mathcal{B}=\{t\in\mathcal{V}_M|s[i:] {\in} \mathrm{prefix}( \mathrm{decode} ({t}))\}$   \\
     $\mathrm{left\_tokens} = \mathrm{encode}(s[:i-1]; 0\to \mathcal{V}_M)$\\
     \texttt{\#Filter only valid tokens}\\
     \For{$\mathrm{right\_token}  \in \mathcal{B}$}{
        $s' = \mathrm{decode}(\mathrm{left\_tokens} \cdot \mathrm{right\_token})$\\
        $\mathrm{check1} = \mathrm{is\_valid}(\mathrm{left\_tokens} \cdot \mathrm{right\_token})$\\
        $\mathrm{check2} = \mathrm{is\_equal}\big(\,\vec{\mathbf{e}}_{\mathcal{V}_{M'}} ;  \mathrm{encode}(s'; 0\!\to\! M')\,\big)[:|\vec{\mathbf{e}}_{\mathcal{V}_{M'}}|]$\\
         \If{$\mathrm{check1} $ and $ \mathrm{check2}$}{
       $\mathrm{cover\_dict}.\mathrm{add} (\mathrm{left\_tokens} \cdot \mathrm{right\_token)}$
    }
     }
}
     \Return $\mathrm{cover\_dict}$
}
\end{algorithm}

\subsection{Next Sub-Token Sampling}
In this section, we focus on computing the next-token probability for the subset vocabulary $\mathcal{V}_{M'}$ using the LM operating on the original vocabulary $\mathcal{V}$, i.e. $P_M(.|\vec{\mathbf{e}}_{M'})$, similar to the next-byte sampling procedure in \citep{phan2025exact}. We assume that we have access to:
\begin{enumerate}
    \item The set of (relative) cover encoding and the corresponding probabilities of $\vec{\mathbf{e}}_{M'}[:|\vec{\mathbf{e}}_{M'}|-1]$, i.e. exclude the last token, obtained either from Algorithm \ref{alg:cover search} or from previous sampling call (to be shown). 
    \item The conditional next-token probability distribution $P_M(.|\mathrm{encode}_{M'\to M}(\vec{\mathbf{e}}_{M'}[:|\vec{\mathbf{e}}_{M'}| -1]) )$, obtained via previous model inference call.
\end{enumerate}
We define the following indicator function for the event that the token $t \in \mathcal{V}_M$ begins with the sub-token $t' \in \mathcal{V}_M'$ through the $\mathrm{decode}_{ M\to M'}(\cdot)$ operator. 

\begin{align}
\tau(t,t') \;:\; \mathcal{V}_M \times \mathcal{V}_{M'} \to \{0,1\},\qquad
\tau(t,t') \triangleq
\begin{cases}
1, & \text{if } \mathrm{decode}_{M\to M'}(t)[0] = \,t',\\[4pt]
0, & \text{otherwise,}
\end{cases}
\end{align}

We can proceed the sampling process through the follow 2-step procedure:

\textbf{Step 1 (Relative Cover-Encodings Construction). } We construct the cover encoding of $\vec{\mathbf{e}}_{M'}$ by:
\begin{enumerate}
    \item Remove the encodings in $\mathrm{cover}(\vec{\mathbf{e}}_{M'}[:|\vec{\mathbf{e}}_{M'}| -1])$ whose last token $t$ does not begins with the sampled sub-token $\vec{\mathbf{e}}_{M'}[-1]$, i.e. $\tau(t,\vec{\mathbf{e}}_{M'}[-1])=0$. We call this set $C_{\mathrm{new}}$.
    \item Add the valid encodings $\mathrm{encode}_{M'\to M}(\vec{\mathbf{e}}_{M'}[:|\vec{\mathbf{e}}_{M'}| -1]) \cdot t$ for any tokens $t$ in $\mathcal{V}_M$ that \emph{begins} with the sampled sub-token $\vec{\mathbf{e}}_{M'}[-1]$, i.e. $\tau(t,\vec{\mathbf{e}}_{M'}[-1])=1$,  in $C_{\mathrm{new}}$.
\end{enumerate}
The resulting set $C_{\mathrm{new}}$ thus is the desired $\mathrm{cover}(\vec{\mathbf{e}}_{M'})$. 

\textbf{Step 2 (Compute Conditional Distribution). } To sample the next token $t' \in \mathcal{V}_{M'}$ according $P_{M'}(.|\vec{\mathbf{e}}_{M'})$, we note that:
\begin{align}
    P_{M'}(\vec{\mathbf{e}}_{M'} \cdot t' ) = P_{M'}(\vec{\mathbf{e}}_{M'} \cdot t' , \delta(\vec{\mathbf{e}}_{M'} \cdot t') =1) +  P_{M'}(\vec{\mathbf{e}}_{M'} \cdot t' , \delta(\vec{\mathbf{e}}_{M'} \cdot t' ) =0) ,
\end{align}
where $\delta(\vec{\mathbf{e}}_{M'} \cdot t') =1$ if the sub-vocab encoding $\vec{\mathbf{e}}_{M'} \cdot t' \in \mathrm{decode}_{M\to M'}(\vec{\mathbf{e}'}_M)$ where  $\vec{\mathbf{e}'}_M \in \mathrm{cover}(\vec{\mathbf{e}}_{M'})$ and $0$ vice versa. In other words, it is an indicator function whether the sub-vocab encoding is contained within one of the cover encoding of $\vec{\mathbf{e}}_{M'}$. Note that the first term is the sum of all cover encoding probabilities that satisfy the indicator function property. 

For the second term, we have:
\begin{align}
     &P_{M'}(\vec{\mathbf{e}}_{M'} \cdot t' , \delta(\vec{\mathbf{e}}_{M'} \cdot t' ) =0) \\ 
     &=  P_{M}( t \in \mathcal{V}_M \text{ starts with }t'| \mathrm{encode}_{M'\to M}(\vec{\mathbf{e}}_{M'})) P_M(\mathrm{encode}_{M'\to M}(\vec{\mathbf{e}}_{M'} )), \label{sample_next_sub}
\end{align}
where the first term in the product is the model evaluation step we need to run and the second term is already computed from previous cover encodings step. We note that computing the first term in Equation (\ref{sample_next_sub}) can be done in parallel. In particular, we can pre-compute the fixed prefix matrix $\mathrm{Q} = \mathbb{R}^{|\mathcal{V}_{M'}| \times |\mathcal{V}_{M}|}$ where:
\begin{align}
\mathrm{Q}[i,j] =
\begin{cases}
1, & \text{if } t' = \mathcal{V}_{M'}[i] \text{ is a prefix of } t = \mathcal{V}_M[j], \\[4pt]
0, & \text{otherwise.}
\end{cases}
\end{align}
and thus for the sub-token $t'=\mathcal{V}_{M'}[i]$, we have:
\begin{align}
    &P_{M}( t \in \mathcal{V}_M \text{ starts with }t'| \mathrm{encode}_{M'\to M}(\vec{\mathbf{e}}_{M'})) \\ &= Q \cdot P_{M}(\cdot | \mathrm{encode}_{M'\to M}(\vec{\mathbf{e}}_{M'})) [i],
\end{align}
where $P_{M}(\cdot | \mathrm{encode}(\vec{\mathbf{e}}_{M'}; M'\to M ))$ can be computed in one inference round. Thus this sampling procedure requires only 1 inference step per next token. We show the whole procedure in Algorithm \ref{alg:next_sub_sampling}.


\section{Proof of Lemma \ref{backward_lemma}}\label{proof_backward_lemma}
\textbf{Lemma 1.} 
Let $\mathcal{V}_{M'} \preceq \mathcal{V}_M$, and let $P(\cdot)$ be a language model over any arbitrary vocabulary $\mathcal{V}$ with $\mathcal{A} \subset \mathcal{V}$. Given a prefix encoding $\vec{\mathbf{e}}_{\mathcal{V}_{M'}}$ over $\mathcal{V}_{M'}$, the probability that the language model $P(.)$  generates a string whose encoding under $\mathcal{V}_{M'}$ begins with $\vec{\mathbf{e}}_{\mathcal{V}_{M'}}$ is given by:
\begin{equation}
    P(\vec{\mathbf{e}}_{\mathcal{V}_{M'}}) 
    = \sum_{\vec{\mathbf{e}}_{\mathcal{V}_M} \in \mathcal{C}} 
    P(\vec{\mathbf{e}}_{\mathcal{V}_M}), 
\end{equation}
where $\mathcal{C} \triangleq  \mathrm{cover}_{M' \to M}(\vec{\mathbf{e}}_{\mathcal{V}_{M'}})$ is the set of relative cover encodings in Definition \ref{def:relative_cover_encs}. 

\begin{algorithm}[t]
\caption{Sampling Next Sub-Token}
\label{alg:next_sub_sampling}
\KwIn{Encoding $\vec{\mathbf{e}}_{M'}$ over vocabulary $\mathcal{V}_{M'}$; target vocabulary $\mathcal{V}_M$;\\
\quad\quad\quad Relative Cover Sets $\{(c_i,p_i)\} = \mathrm{cover}_{M'\to M}(\vec{\mathbf{e}}_{\mathcal{V}_{M'}}[:|\vec{\mathbf{e}}_{M'}|-1])$ (including the probabilities $p_i$ for each cover $c_i$);\\
\quad\quad\quad Conditional distribution $p_{\mathrm{cond}}(\cdot)= P_M\!\big(\cdot \mid \mathrm{encode}_{M'\to M}(\vec{\mathbf{e}}_{M'}[:|\vec{\mathbf{e}}_{M'}|-1])\big)$.}

\SetKwFunction{FMain}{Next\_SubToken\_Sampling}
\SetKwProg{Fn}{Function}{:}{\KwRet}
\Fn{\FMain{$\vec{\mathbf{e}}_{\mathcal{V}_{M'}},\, \mathcal{V}_M,\, \{(c_i,p_i)\},\, p_{\mathrm{cond}}(\cdot)$}}{   
    \textbf{Step 1: Relative Cover-Encodings Construction}\;
    $\mathrm{cover}_{M'\to M}(\vec{\mathbf{e}}_{\mathcal{V}_{M'}}) \gets \texttt{Cover\_Construction}\!\big(\vec{\mathbf{e}}_{\mathcal{V}_{M'}},\, \mathcal{V}_M,\, \{(c_i,p_i)\},\, p_{\mathrm{cond}}(\cdot)\big)$\;

    \textbf{Step 2: Compute Conditional Distribution}\;
    \tcp{\textcolor{red}{Note:} $\mathrm{encode}_{M'\to M}(\vec{\mathbf{e}}_{M'}), P_M(\mathrm{encode}_{M'\to M}(\vec{\mathbf{e}}_{M'})) \in \mathrm{cover}_{M'\to M}(\vec{\mathbf{e}}_{\mathcal{V}_{M'}})$} 
    $p_{\mathrm{next}}(\cdot) \gets Q \cdot P_{M}\!\big(\cdot \mid \mathrm{encode}_{M'\to M}(\vec{\mathbf{e}}_{M'})\big)\cdot P_M\!\big(\mathrm{encode}_{M'\to M}(\vec{\mathbf{e}}_{M'})\big)$\;

    \For{$(c, p)\ \in\ \mathrm{cover}_{M'\to M}(\vec{\mathbf{e}}_{\mathcal{V}_{M'}})$}{
        $u' \gets \mathrm{decode}_{M\to M'}(c)[\,|\vec{\mathbf{e}}_{M'}|:\,]$\;
        $t' \gets u'[0]$\;
        $p_{\mathrm{next}}[t'] \gets p_{\mathrm{next}}[t'] + p$\;
    }

    $p_{\mathrm{next}}(\cdot) \gets p_{\mathrm{next}}(\cdot)\big/\displaystyle\sum p_{\mathrm{next}}(\cdot)$\;
    Sample next-token $t' \sim p_{\mathrm{next}}(\cdot)$\;
    \KwRet{$p(\cdot),\ \mathrm{cover}_{M'\to M}(\vec{\mathbf{e}}_{\mathcal{V}_{M'}})$}\;
}
\SetKwFunction{FSup}{Cover\_Construction}
\Fn{\FSup{$\vec{\mathbf{e}}_{\mathcal{V}_{M'}},\, \mathcal{V}_M,\, \{(c_i,p_i)\},\, p_{\mathrm{cond}}(\cdot)$}}{   
    \tcp{\textcolor{red}{Recall:} $\mathrm{cover}_{M'\to M}(\vec{\mathbf{e}}_{\mathcal{V}_{M'}}[:|\vec{\mathbf{e}}_{M'}|-1]) = \{(c_i,p_i)\}$} 
    $C_\mathrm{new} \gets \{\}$\;
    \For{$(c, p)\ \in\ \mathrm{cover}_{M'\to M}(\vec{\mathbf{e}}_{\mathcal{V}_{M'}}[:|\vec{\mathbf{e}}_{M'}|-1])$}{
        $t \gets c[-1]$\;
        \If{$\tau(t,\,\vec{\mathbf{e}}_{M'}[-1]) = 1$}{
            $C_\mathrm{new}.\mathrm{append}((c, p))$\;
        }
    }
    $\vec{\mathbf{b}}_M \gets \mathrm{encode}_{M'\to M}\!\big(\vec{\mathbf{e}}_{M'}[:|\vec{\mathbf{e}}_{M'}|-1]\big)$\;
    \For{$t \in \mathcal{V}_M\ \mathbf{such\ that}\ \tau(t,\,\vec{\mathbf{e}}_{M'}[-1])=1  \mathbf{\ and\ } \mathrm{is\_valid}(\vec{\mathbf{b}}_M \cdot t)$}{
        $\vec{\mathbf{y}}_M \gets \vec{\mathbf{b}}_M \cdot t$\;
        $P_M(\vec{\mathbf{y}}_M) = \left(p_{\mathrm{cond}} \cdot P_M(\vec{\mathbf{b}}_M)\right)[t]$\;
        $C_\mathrm{new}.\mathrm{append}((\vec{\mathbf{y}}_M,\, P_M(\vec{\mathbf{y}}_M)))$\;
    }
    \KwRet{$C_\mathrm{new}$}\;
}

\end{algorithm}

\begin{proof}
This result follows as a corollary of the Byte-Token Representation Lemma in \citet{phan2025exact}, where the vocabulary $\mathcal{V}_{M'}$ is treated as a relative alphabet with respect to $\mathcal{V}_M$. The summation over the cover set arises naturally from marginalizing over all tokenizations in $\mathcal{V}_M$ that map to the same decoded prefix under $\mathcal{V}_{M'}$, as established in Section~\ref{sec:rel_alphabet}.
\end{proof}
\section{Proof of Algorithm \ref{alg:recursive_conversion}}\label{proof_adjencent_cons}
To prove the recursion step, we characterize the transition between adjacent vocabularies. 
\begin{lemma}[Adjacent Vocabulary Transition]\label{adjencent_cons}
Let $\mathcal{V}_{M'} \preceq \mathcal{V}_{M'+1}$ be adjacent vocabularies, and let $P_{M'}(.)$ denote a language model defined over $\mathcal{V}_{M'}$. Let $\vec{\mathbf{e}}_{\mathcal{V}_{M'+1}}$ be an encoding over $\mathcal{V}_{M'+1}$, and define its decoded form under $\mathcal{V}_{M'}$ as
\[
\vec{\mathbf{e}}_{\mathcal{V}_{M'}} = \mathrm{decode}_{M'+1 \to M'}(\vec{\mathbf{e}}_{\mathcal{V}_{M'+1}}).
\]
Let $t_{M'+1} = t^{\mathrm{left}}_{M'+1} \cdot t^{\mathrm{right}}_{M'+1}$ be the token added in $\mathcal{V}_{M'+1}$. Then, the probability of $\vec{\mathbf{e}}_{\mathcal{V}_{M'+1}}$ under $P_{M'}$ is given by:
\[
P_{M'}(\vec{\mathbf{e}}_{\mathcal{V}_{M'+1}}) =
\begin{cases}
\hfill P_{M'}\left( \vec{\mathbf{e}}_{\mathcal{V}_{M'}} \right) \hfill & \text{if } \vec{\mathbf{e}}_{\mathcal{V}_{M'+1}}[-1] \neq t^{\mathrm{left}}_{M'+1}, \\[6pt]
\hfill P_{M'}\left( \vec{\mathbf{e}}_{\mathcal{V}_{M'}} \right) - P_{M'}\left( \vec{\mathbf{e}}_{\mathcal{V}_{M'}} \cdot t^{\mathrm{right}}_{M'+1} \right) \hfill & \text{otherwise}.
\end{cases}
\]
\end{lemma}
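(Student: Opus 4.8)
## Proof Plan for Lemma \ref{adjencent_cons}

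The plan is to reduce this adjacent-vocabulary transition to the already-established Full-to-Subset result (Lemma~\ref{backward_lemma}) applied to the single-step pair $\mathcal{V}_{M'} \preceq \mathcal{V}_{M'+1}$. Lemma~\ref{backward_lemma} tells us that for any encoding $\vec{\mathbf{e}}_{\mathcal{V}_{M'+1}}$ over $\mathcal{V}_{M'+1}$ and its decoded form $\vec{\mathbf{e}}_{\mathcal{V}_{M'}} = \mathrm{decode}_{M'+1 \to M'}(\vec{\mathbf{e}}_{\mathcal{V}_{M'+1}})$, we have $P_{M'}(\vec{\mathbf{e}}_{\mathcal{V}_{M'}}) = \sum_{\vec{\mathbf{e}} \in \mathcal{C}} P_{M'}(\vec{\mathbf{e}})$ where $\mathcal{C} = \mathrm{cover}_{M' \to M'+1}(\vec{\mathbf{e}}_{\mathcal{V}_{M'}})$ — wait, this is the wrong direction. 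Instead I would invoke Lemma~\ref{backward_lemma} with the roles arranged so that $P_{M'}$ scores encodings of the \emph{larger} vocabulary $\mathcal{V}_{M'+1}$: the point is that $P_{M'}(\vec{\mathbf{e}}_{\mathcal{V}_{M'}})$ — the probability that a string sampled from $P_{M'}$ has $\mathcal{V}_{M'}$-encoding starting with $\vec{\mathbf{e}}_{\mathcal{V}_{M'}}$ — decomposes as a sum over the relative cover encodings in $\mathcal{V}_{M'+1}$ that all decode back to a prefix of $\vec{\mathbf{e}}_{\mathcal{V}_{M'}}$ and whose last $\mathcal{V}_{M'+1}$-token extends past it. So the core task is simply to \emph{enumerate the cover set} $\mathrm{cover}_{M' \to M'+1}(\vec{\mathbf{e}}_{\mathcal{V}_{M'}})$ explicitly, then rearrange.

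First I would analyze the structure of $\mathrm{cover}_{M' \to M'+1}(\vec{\mathbf{e}}_{\mathcal{V}_{M'}})$ directly from Definition~\ref{def:relative_cover_encs}. Since $\mathcal{V}_{M'+1}$ adds exactly one new token $t_{M'+1} = t^{\mathrm{left}}_{M'+1} \cdot t^{\mathrm{right}}_{M'+1}$, the map $\mathrm{decode}_{M'+1 \to M'} = \mathrm{demerge}_{M'+1}$ only ever splits occurrences of $t_{M'+1}$ into the pair $(t^{\mathrm{left}}_{M'+1}, t^{\mathrm{right}}_{M'+1})$. A relative cover encoding $\vec{\mathbf{e}}_{\mathcal{V}_{M'+1}}$ must (i) be a valid/canonical $\mathcal{V}_{M'+1}$-encoding of some underlying sequence, and (ii) have all but its last token decode exactly to $\vec{\mathbf{e}}_{\mathcal{V}_{M'}}[1{:}i-1]$ with the last token's demerge covering the remaining suffix $\vec{\mathbf{e}}_{\mathcal{V}_{M'}}[i{:}]$. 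The key combinatorial observation is that there are at most two candidates: the "trivial" one obtained by re-merging $\vec{\mathbf{e}}_{\mathcal{V}_{M'}}$ under $\mathcal{V}_{M'+1}$ (i.e. $\mathrm{encode}_{M' \to M'+1}(\vec{\mathbf{e}}_{\mathcal{V}_{M'}})$ truncated appropriately, whose last token demerges back to $\vec{\mathbf{e}}_{\mathcal{V}_{M'}}$ itself), and — only when $\vec{\mathbf{e}}_{\mathcal{V}_{M'}}[-1] = t^{\mathrm{left}}_{M'+1}$ — one additional candidate whose last token is $t_{M'+1}$, arising from a continuation in which the next $\mathcal{V}_{M'}$-token would be $t^{\mathrm{right}}_{M'+1}$, triggering the new merge. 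I would make this precise by a short case split on whether $\vec{\mathbf{e}}_{\mathcal{V}_{M'}}[-1]$ can serve as the left half of the new merge, using the canonicality condition~(\ref{valid_cond}) to rule out spurious covers.

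With the cover set pinned down, the two cases of the lemma follow by substituting into Lemma~\ref{backward_lemma}. In the case $\vec{\mathbf{e}}_{\mathcal{V}_{M'+1}}[-1] \neq t^{\mathrm{left}}_{M'+1}$ the cover is the singleton $\{\mathrm{encode}_{M' \to M'+1}(\vec{\mathbf{e}}_{\mathcal{V}_{M'}})\}$, so $P_{M'}(\vec{\mathbf{e}}_{\mathcal{V}_{M'}}) = P_{M'}(\vec{\mathbf{e}}_{\mathcal{V}_{M'+1}})$, giving the first branch. Otherwise the cover has exactly two elements — the re-merge of $\vec{\mathbf{e}}_{\mathcal{V}_{M'}}$ and the encoding $\mathrm{encode}_{M'\to M'+1}(\vec{\mathbf{e}}_{\mathcal{V}_{M'}} \cdot t^{\mathrm{right}}_{M'+1})$ whose last token is $t_{M'+1}$ — so Lemma~\ref{backward_lemma} gives $P_{M'}(\vec{\mathbf{e}}_{\mathcal{V}_{M'}}) = P_{M'}(\vec{\mathbf{e}}_{\mathcal{V}_{M'+1}}) + P_{M'}(\vec{\mathbf{e}}_{\mathcal{V}_{M'}} \cdot t^{\mathrm{right}}_{M'+1})$, and rearranging yields the second branch. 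I expect the main obstacle to be the careful verification that \emph{no other} cover encodings exist — in particular ruling out, via canonicality, the possibility that the last token of a cover is some older token $t_j$ ($j < M'+1$) that still begins with the suffix $\vec{\mathbf{e}}_{\mathcal{V}_{M'}}[i{:}]$ after demerging; here one uses that $\vec{\mathbf{e}}_{\mathcal{V}_{M'}}$ is itself canonical under $\mathcal{V}_{M'}$, so any such split would have already been realized inside $\vec{\mathbf{e}}_{\mathcal{V}_{M'}}$, contradicting that the overlap occurs strictly in the last token. Handling the boundary interaction with \texttt{EOS} (recalling $t^{\mathrm{left}}_{M'+1}, t^{\mathrm{right}}_{M'+1} \neq \texttt{EOS}$) is a minor additional check.
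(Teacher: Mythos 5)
Your proposal is correct and follows essentially the same route as the paper's proof: it enumerates the relative cover set $\mathrm{cover}_{M' \to M'+1}(\vec{\mathbf{e}}_{\mathcal{V}_{M'}})$ for the adjacent pair, observes that it is a singleton when $\vec{\mathbf{e}}_{\mathcal{V}_{M'+1}}[-1] \neq t^{\mathrm{left}}_{M'+1}$ and contains exactly one additional element ending in $t_{M'+1}$ otherwise, and then substitutes into Lemma~\ref{backward_lemma} and rearranges. Your extra discussion of ruling out spurious covers via canonicality is a point the paper's proof passes over more quickly, but it does not change the argument.
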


\begin{proof}
Consider first the case where $\vec{\mathbf{e}}_{\mathcal{V}_{M'+1}}[-1] \neq t^{\mathrm{left}}_{M'+1}$. In this case, the relative cover set is
\[
\mathrm{cover}_{M' \to M'+1}(\vec{\mathbf{e}}_{\mathcal{V}_{M'}}) = \{ \vec{\mathbf{e}}_{\mathcal{V}_{M'+1}} \},
\]
since the last token $\vec{\mathbf{e}}_{\mathcal{V}_{M'+1}}[-1]$ is a part of any merged token in $\mathcal{V}_{M'+1}$, including $t_{M'+1}$. Hence, the probability under $P_{M'}$ remains unchanged:
\[
P_{M'}(\vec{\mathbf{e}}_{\mathcal{V}_{M'+1}}) = P_{M'}(\vec{\mathbf{e}}_{\mathcal{V}_{M'}}).
\]

Now consider the case where $\vec{\mathbf{e}}_{\mathcal{V}_{M'+1}}[-1] = t^{\mathrm{left}}_{M'+1}$. In this case, the relative cover set becomes
\[
\mathrm{cover}_{M' \to M'+1}(\vec{\mathbf{e}}_{\mathcal{V}_{M'}}) = \left\{ \vec{\mathbf{e}}_{\mathcal{V}_{M'+1}},\; \vec{\mathbf{e}}_{\mathcal{V}_{M'+1}}[1{:}|\vec{\mathbf{e}}_{\mathcal{V}_{M'+1}}|{-}1] \cdot t_{M'+1} \right\}.
\]
Applying Lemma~\ref{backward_lemma}, which decomposes the marginal probability over all possible relative cover encodings, we obtain:
\begin{align}
P_{M'}(\vec{\mathbf{e}}_{\mathcal{V}_{M'+1}}) 
&= P_{M'}(\vec{\mathbf{e}}_{\mathcal{V}_{M'}}) 
- P_{M'}\left( \vec{\mathbf{e}}_{\mathcal{V}_{M'+1}}[1{:}|\vec{\mathbf{e}}_{\mathcal{V}_{M'+1}}|{-}1] \cdot t_{M'+1} \right) \\
&= P_{M'}(\vec{\mathbf{e}}_{\mathcal{V}_{M'}}) 
- P_{M'}\left( \vec{\mathbf{e}}_{\mathcal{V}_{M'}} \cdot t^{\mathrm{right}}_{M'+1} \right),
\end{align}
where the second equality is due to the previous case, i.e. $t_{M'+1} \neq t^{\mathrm{left}}_{M'+1}$ is not a part of any other token in $\mathcal{V}_{M'+1}$. This completes the proof.
\end{proof}

Finally, given the result in Lemma \ref{adjencent_cons} the recursion step in Algorithm \ref{alg:recursive_conversion} follows naturally.

\subsection{Scoring and Sampling with Beam-Search} \label{approx_procedure}

\textbf{Approximate Scoring Procedure. }  
We focus on the conversion case from the byte-level model $\mathcal{V}_0=\mathcal{A}$ to a target vocabulary $\mathcal{V}_M$, denoted simply as $\mathcal{V}$. Let $\vec{\mathbf{e}}$ be an encoding in $\mathcal{V}$ with string representation $s=\mathrm{decode}(\vec{\mathbf{e}})$. From the lossless Algorithm~\ref{alg:recursive_conversion}, we have
\begin{align}
    P_{\mathcal{V}}(\vec{\mathbf{e}}) \;=\; P_0(s) \;-\; \sum_{s' \in \Omega} P_0(s'),
\end{align}
where $P_0(\cdot)$ denotes probabilities under the byte-level model, and $\Omega$ is the correction set returned by Algorithm~\ref{alg:recursive_conversion}. Each $P_0(s')$ represents the probability that a sequence begins with prefix $s'$. Since evaluating the full set $\Omega$ is generally intractable, our goal is to approximate the correction term $\sum_{s' \in \Omega} P_0(s')$ by identifying a restricted collection of candidate strings $s'$ that capture the dominant contributions, without explicitly enumerating all elements of $\Omega$. First, observe that
\begin{align}
    P_0(s) 
    &= P\big(\text{sequence starts with } s \text{ and matches } \vec{\mathbf{e}}\big) 
       + P\big(\text{sequence starts with } s \text{ but not } \vec{\mathbf{e}}\big) \\
    &= P_{\mathcal{V}}(\vec{\mathbf{e}}) 
       + P\big(\text{sequence starts with } s \text{ but not } \vec{\mathbf{e}}\big),
\end{align}
and hence
\begin{align}
    P_{\mathcal{V}}(\vec{\mathbf{e}}) 
    \;=\; P_0(s) \;-\; P\big(\text{sequence starts with } s \text{ but not } \vec{\mathbf{e}}\big).
\end{align}


We now make the following approximation:
\begin{align}
   P\big(\text{sequence starts with } s \text{ but not } \vec{\mathbf{e}}\big)  \approx \sum_{s' \in \Pi} P(s'),  
\end{align}
where:
\begin{align}
    \Pi = &\{s' \text{ starts with } s,\\
    &s'[-1] \text{ is the first white space/\texttt{EOS} appears after prefix }s, \\
    &\mathrm{encode}(s')[:|\vec{\mathbf{e}}|] \neq \vec{\mathbf{e}} \} 
   \label{condition_approx}
\end{align}
This assumption is not overly restrictive, particularly for languages such as English where whitespace is frequently used. Intuitively, it asserts that a continuation $s'$ of $s$ is more likely to correspond to a valid ``word'' rather than an arbitrary long character sequence. Moreover, under the standard pre-tokenization rule in which whitespace appears only at the beginning of a token, once a whitespace is sampled, we can uniquely determine which encoding of $s'$ follows up to that point \footnote{Readers may find this resembles---though is not identical to---the process of computing the probability of a word \citep{pimentel2024compute}.}. 


Combine all the terms, we have the following approximation for $P_\mathcal{V}(\vec{\mathbf{e}})$:
\begin{align}
    P_{\mathcal{V}}(\vec{\mathbf{e}}) \approx  P_0(s) - \sum_{s' \in \Pi} P(s'), 
\end{align}
We summarize the procedure in two steps. Assume access to a byte-level LM $P_0(\cdot)$ (given or obtained via conversion), and let $\vec{\mathbf{e}}$ denote the target encoding.
\begin{enumerate}
    \item \emph{Beam-search:} Starting from $P_0(.|s)$, generates $N$ candidate beams $s'_1,...s'_N$ using beamsearch until seeing a whitespace/\texttt{EOS}. 
    \item \emph{Return: }  $ P_{\mathcal{V}}(\vec{\mathbf{e}}) \approx  P_0(s) - \sum_{s' \in \Pi'} P(s')$.
\end{enumerate}

\textbf{Sampling a new token.} Given a byte-level language model $P_0$ and current encoding $\vec{\mathbf{e}}$, we perform byte-level sampling autoregressively until a stopping criterion is met (whitespace or a maximum length of a cover encoding, ...). Let $s$ be the sampled string and $\mathrm{encode}(s)$ its tokenization under $\mathcal{V}$. If $\mathrm{encode}(s)$ begins with $\vec{\mathbf{e}}\cdot t$ for some $t\in\mathcal{V}$, output $t$; otherwise, reject $s$ and resample without recommitting to that path.

\section{Distillation Loss}\label{sec:loss_fn}

We employ the following two distillation objectives for transferring knowledge from a teacher language model \(P_{\mathrm{teacher}}\) to a student model \(P_{\mathrm{student}}\).  In our experiment, we find that combining distillation loss with the SFT loss gives better results.

\paragraph{Per-token KL Distillation \citep{hinton2015distilling}.}  
Following the standard distillation framework, the teacher provides a full predictive distribution \(P_{\mathrm{teacher}}(\cdot \mid x_{<t})\) over the student vocabulary \(\mathcal{V}_{\mathrm{student}} \triangleq \mathcal{V}\). The student is trained to minimize the token-wise Kullback--Leibler (KL) divergence between the teacher and student distributions:
\begin{align}
\mathcal{L}_{\mathrm{KL}} 
&= \sum_{\ell=1}^L 
    D_{\mathrm{KL}}\!\Big(
        P_{\mathrm{teacher}}(\cdot \mid x_{<\ell}) \,\big\|\, P_{\mathrm{student}}(\cdot \mid x_{<\ell})
    \Big) \\
&= - \sum_{\ell=1}^L \sum_{t \in \mathcal{V}} 
    P_{\mathrm{teacher}}(t \mid x_{<\ell}) 
    \log \frac{P_{\mathrm{student}}(t \mid x_{<\ell})}{P_{\mathrm{teacher}}(t \mid x_{<\ell})}. \label{standard_loss}
\end{align}
This loss encourages the student to match the teacher’s complete predictive distribution at every position. It is particularly suitable for the sub-vocabulary distillation problem since we can obtain the probability distribution over the student vocabulary in $\mathcal{O}(1)$ model inference.

\paragraph{Partial KL Divergence Loss.} In general, cross-tokenizer distillation, obtaining the full probability distribution over the vocabulary is impractical due to the time-intensive process of calculating each token's likelihood (approximately 0.5 seconds per token). In our approach, we typically compute probabilities for about three cross-(next) tokens per encoded prompt, leaving a significant portion of the probability mass unexamined. For this reason, we adjust the above KL divergence loss to take into account the unknown probability mass. In particular, we optimize:
\begin{align}
\mathcal{L}_{\mathrm{bin}}
= - \sum_{\ell =1}^L 
    \Big[
        \sum_{t\in Q(\ell)}q_t \, \log P_S(y_t \mid x_{<t}) 
        + (1 - \sum_{t\in Q(\ell)}q_t) \, \log \big( 1 - \sum_{t\in Q(\ell)} P_S(y_t \mid x_{<t}) \big)
    \Big],
    \label{pkl_loss}
\end{align}
where $Q(\ell)$ is the set of random tokens in $\mathcal{V}$ sampled at step $\ell$ by the teacher. In the case $Q(\ell)$ contains only the next token in the corpus, this recovers and generalizes the binary cross entropy loss in \citep{minixhofer2025universal}, without employing the sequence alignment step. 

\newpage
\section{Additional Experiment} \label{add_exp}

\paragraph{Evaluation Frameworks.} All experiments, including the cross-tokenizer distillation, are evaluated using the lm-eval framework \citep{eval-harness}.

\paragraph{Cross-Tokenizer Distillation.}\begin{wrapfigure}{r}{0.54\textwidth}
  \vspace{-10pt}
  \centering
  \centering
  \centering
  {
  \small
  \setlength{\tabcolsep}{4pt}
  \renewcommand{\arraystretch}{0.96}
  \begin{tabular}{@{}lc@{}}
  \toprule
  \textbf{Method} & \textbf{ROUGE-L} \\
  \midrule
  Gemma-2B-IT (Student) & 12.3 \\
  Qwen2.5-7B-IT (Teacher) & 39.1 \\
  \midrule
  SFT                       & 31.9 \\
  ULD \citep{boizardtowards}                      & 28.1 \\
  ULD + SFT \citep{boizardtowards}                &  32.2 \\
  DSKD  \citep{zhang2024dual}                    & 30.3  \\
  DSKD + SFT  \citep{zhang2024dual}              & 32.8     \\
  ALM                       & 20.5 \\
  ALM + SFT                 & 33.1 \\
  PKL (Ours)       & 32.6 \\
  SFT + PKL (Ours) & \textbf{33.9} \\
  \bottomrule
  \end{tabular}
  \captionof{table}{Cross-Tokenizer distillation (Qwen2.5--7B-IT $\rightarrow$ Gemma-2B-IT) on the the DialogSum dataset.}
  \vspace{-5pt}
  \label{tab:distill-dialog}
  }

\end{wrapfigure}

We additionally evaluate the general cross-tokenizer distillation setting on the DialogSum summarization dataset \citep{chen2021dialogsum}, following the experimental setup of \citep{xuspeculative}. DialogSum contains 12.5k examples; we use 10k for supervised finetuning of the teacher model (\texttt{Qwen2.5-7B-Instruct}), 1k for distilling the student model (\texttt{Gemma-2B-Instruct}) using the SFT finetuned teacher, with the remaining 1.5k for validation. This configuration reflects a common low-resource distillation scenario, where only a small student-side dataset is available during knowledge transfer. During the distillation phase, we train the student with a batch size of~8 for a total of 325 steps (3 epochs) using a learning rate of $1\times10^{-5}$. Overall, we find that our approach remains competitive with existing techniques and achieves the highest summarization score, highlighting the generality and robustness of our framework across tokenizer mismatches.

\paragraph{Vocabulary Trimming. } We report additional results on common reasoning benchmarks. In this setting, rather than fine-tuning on each target dataset (as in the GSM8K and coding experiments), we continue training on the {Alpaca} dataset for two more epochs and then evaluate on the benchmarks. The results is shown in Table \ref{tab:vocab-ablation} where our method consistently outperforms the baseline methods on most benchmarks.

\begin{table*}[t]
\centering
\scriptsize
\setlength{\tabcolsep}{6pt}
\renewcommand{\arraystretch}{1.0}
\begin{tabular}{@{}l l c c c c c c c c c c@{}}
\toprule
\textbf{Vocab} & \textbf{Task} & \textbf{SFT} & \textbf{FKL  (Ours)} & \textbf{FKL* (Ours)} & \textbf{ALM} & \textbf{ALM*} & \textbf{ULD} &\textbf{ULD*} & \textbf{DSKD} & \textbf{DSKD*}\\
\midrule
\multirow{6}{*}{64k}
 & ARC-C                & 46.5   & 46.5 & \textbf{46.9} & 46.4 & 46.3  &  45.1 &  46.2 &  45.9 &  46.3\\
 & ARC-E                & 72.9  & 73.2 & \textbf{74.0} & 72.4 & 72.5 &  72.2 &  72.4 &  73.1 &  73.6\\
 & BOOLQ                & \textbf{79.8}         & 77.5 & 78.9 & 78.2        & 79.3   &  75.5 &  75.8 &  77.2 &  78.9     \\
 & COMMONSENSE          & 75.1         & 75.2 & \textbf{75.4} & {75.3}        & {74.9}  &  75.3 &  75.1 &  74.7 &  75.2       \\
 & Hellaswag    & 49.2  & 48.9 & \textbf{49.5} & 48.9 & 49.1 &  48.1 & 49.0 & 48.5 & 48.8\\
 & PIQA         & 75.2& 74.9 & \textbf{75.5} & 74.5 & 75.0 &  75.1 &  75.3 &  72.1 & 74.9\\
\midrule
\multirow{6}{*}{32k}
 & ARC-C                & 43.7         & 43.2 & \textbf{44.4} & 43.3 & 43.9 &  42.4 &  44.2  & 43.3 &  44.0\\
 & ARC-E                & 71.2         & 68.9 & \textbf{71.6} & 67.7 & 68.6 & 66.5 & 67.1 &   69.9 &  70.2\\
 & BOOLQ                & 78.3         & \textbf{78.8} & {78.5} & {78.3}        & 78.6    & 78.4 &  78.2 &  77.2 &  78.6    \\
 & COMMONSENSE          & 74.5         & 75.0 & \textbf{75.7} & {75.1}        & {75.0}    &  72.4 &  73.4 & 72.9 &  73.8   \\
 & Hellaswag   & 47.5         & 47.2 & \textbf{47.8} & 46.9 & 47.4 &  46.3 & 46.9 & 47.1 & 47.5 \\
 & PIQA         & 73.2         & 72.3 & \textbf{73.5} & 71.9 & 72.2  &  70.1 & 71.0 & 72.5 & 72.9 \\
\midrule
\multirow{6}{*}{16k}
 & ARC-C                & 40.1  & \textbf{42.4} & 42.0 & 41.4 & 41.8  &  40.1 &41.8 & 40.9 & 41.2 \\
 & ARC-E                & 66.9 & 66.8 & \textbf{68.1} & 65.1 & 66.6  &  62.9 & 64.1 & 63.5 & 64.7 \\
 & BOOLQ                & 78.8         & 78.1 & \textbf{78.9} & 78.6        & 78.6      &  72.1 &  72.4 & 72.8 &  73.6   \\
 & COMMONSENSE          & \textbf{74.7}         & 74.3  & 73.8 & 74.4        & 74.3   &  70.1 &  70.3 & 71.0 & 72.8      \\
 & Hellaswag   & 45.4& 45.3 & \textbf{45.7} & 44.9 & 45.2  &  40.1 & 42.8 & 41.1 & 43.8 \\
 & PIQA         & \textbf{71.4}  & 70.6 & 71.0 & 70.1 & 70.6  &  67.1 & 68.9 & 70.5 & 70.7 \\
\bottomrule
\end{tabular}
\caption{Performance across vocab sizes on common reasoning datasets. We use 4-shot prompting for all tasks and setups. * denotes distillation loss function combined with SFT objective. }
\label{tab:vocab-ablation}
\end{table*}

\end{document}